\documentclass[acmsmall, screen]{acmart}
\AtBeginDocument{%
  }

\setcopyright{acmlicensed}
\copyrightyear{2026}
\acmYear{2026}
\acmDOI{XXXXXXX.XXXXXXX}

\acmJournal{TIST}

\usepackage{multirow}
\usepackage{booktabs}
\usepackage{algorithm}
\usepackage{algpseudocode}
\usepackage{amsthm}
\usepackage{textcomp}
\usepackage{xcolor}
\newtheorem{proposition}{Proposition}
\newtheorem{remark}{Remark}

\begin{document}

\title{TL++: Accuracy and Privacy Preserving Traversal Learning for Distributed Intelligent Systems}

\author{Erdenebileg Batbaatar}
\affiliation{%
  \institution{Neouly}
  \city{Seoul}
  \country{Republic of Korea}
}
\email{erdenebileg11@gmail.com}

\author{Young Yoon}
\authornote{Corresponding author}
\affiliation{%
  \institution{Department of Computer Engineering, Hongik University}
  \city{Seoul}
  \country{Republic of Korea}
}
\email{young.yoon@hongik.ac.kr}

\renewcommand{\shortauthors}{Batbaatar and Yoon}

\begin{abstract}
Distributed intelligent systems increasingly need to train across data silos without centralizing raw data. Federated learning keeps data local, but heterogeneous partitions can degrade accuracy and require repeated full-model exchange. Split learning reduces communication through cut-layer activations, but standard protocols do not generally recover pooled mini-batch gradients and may expose activations and gradients in plaintext. We present TL++, a two-mode traversal-learning framework that constructs virtual batches across nodes to recover centralized mini-batch gradient behavior under explicit synchronization assumptions.
Base mode exchanges cut-layer activations and gradients instead of full models. Secure mode secret-shares each cut-layer activation and gradient between an orchestrator and a non-colluding helper, so neither server alone observes plaintext cut-layer tensors. This protection is limited to a semi-honest, non-colluding two-server setting; labels and output values used for loss remain visible to the orchestrator. In the lightweight secure path evaluated here, exactness requires the sharewise server path to be linear or affine; nonlinear server operations require nonlinear MPC or are approximate.
We formalize TL++, analyze communication and computation costs, and evaluate it against federated- and split-learning baselines on CIFAR-10 and BioGPT/\allowbreak{}PubMedQA using full fine-tuning and LoRA. On CIFAR-10, TL++ base cut~1 and exact secure cut~3 achieve $91.41 \pm 0.19\%$ and $90.93 \pm 0.17\%$ accuracy, outperforming the strongest measured non-TL++ baseline by more than 12 percentage points. TL++ base cut~1 also reduces per-step communication payload by $13.1\times$ relative to full-model synchronization. PubMedQA results similarly favor TL++ under both tuning settings, though they remain descriptive without paired trajectories. Overall, TL++ approaches centralized-level utility while reducing communication and supporting scoped activation-level secret sharing.
\end{abstract}

\begin{CCSXML}
<ccs2012>
   <concept>
       <concept_id>10010147.10010178.10010219</concept_id>
       <concept_desc>Computing methodologies~Distributed artificial intelligence</concept_desc>
       <concept_significance>500</concept_significance>
       </concept>
   <concept>
       <concept_id>10002978.10003006</concept_id>
       <concept_desc>Security and privacy~Systems security</concept_desc>
       <concept_significance>500</concept_significance>
       </concept>
 </ccs2012>
\end{CCSXML}

\ccsdesc[500]{Computing methodologies~Distributed artificial intelligence}
\ccsdesc[500]{Security and privacy~Systems security}

\keywords{Traversal learning, privacy-preserving machine learning, distributed deep learning, secure multiparty computation}


\maketitle

\section{Introduction}
\label{sec:intro}

Edge and distributed intelligent systems---including clinical monitors, autonomous fleets, and industrial sensors---generate sensitive data that could improve model robustness if pooled centrally. Centralization, however, raises privacy and bandwidth costs and can conflict with data-sovereignty regulations such as GDPR~\cite{european_union_gdpr_2016} and HIPAA~\cite{us_congress_hipaa_1996}, as well as broader privacy and federated-learning concerns~\cite{kairouz_advances_2021,li_review_2020}. A central challenge for distributed artificial intelligence (AI) is therefore to combine \emph{centralized-gradient equivalence}, \emph{communication efficiency}, and \emph{security of intermediate computations}. Here, ``accuracy losslessness'' means producing the same mini-batch gradient update that centralized training would compute on the same sampled batch, not guaranteeing identical final test accuracy across all seeds or implementations.

No existing paradigm satisfies all three requirements simultaneously. Federated Learning (FL)~\cite{mcmahan_communication-efficient_2017} keeps raw data on-device and aggregates local model updates, but falls short on each dimension. Under non-independent and identically distributed (non-IID) data, local/global gradient divergence degrades convergence with label skew~\cite{zhao_federated_2018,karimireddy_scaffold_2020,wang_tackling_2020,acar_federated_2021,li_model-contrastive_2021}. Full-parameter exchange grows costly with model size; compression and quantization reduce bytes but may compound approximation error under heterogeneity~\cite{konecny_federated_2016,lin_deep_2018,alistarh_qsgd_2017,bernstein_signsgd_2018}. Finally, shared model updates can reveal training samples~\cite{zhu_deep_2019,zhao_idlg_2020,geiping_inverting_2020}, while secure aggregation~\cite{bonawitz_practical_2017} adds overhead and does not protect intermediate split-learning computations.

Clustered Federated Learning (CFL)~\cite{sattler_clustered_2019} improves personalization by partitioning incompatible clients into specialized models, but it remains an FL-family method: clients exchange model updates, the target is no longer the pooled centralized objective, and intermediate split-learning activations or gradients are not protected. CFL is therefore complementary to TL++ rather than a substitute for its virtual-batch and activation-level security design.

Split Learning (SL)~\cite{gupta_distributed_2018,vepakomma_split_2018} improves communication by transmitting cut-layer activations rather than full parameters, but it still processes one client's samples per forward pass, precluding cross-participant gradient aggregation and large-batch benefits under heterogeneity~\cite{li_federated_2020}. It also sends cut-layer activations and gradients in plaintext, exposing them to gradient inversion~\cite{zhu_deep_2019}, model inversion~\cite{he_model_2019}, and optimization-based reconstruction even at deep cuts~\cite{pasquini_unleashing_2021}.

Traversal Learning (TL)~\cite{batbaatar_traversal_2025} resolves the accuracy limitation by coordinating mini-batch construction across distributed nodes. TL assembles \emph{virtual batches} spanning multiple participants, enabling gradient aggregation mathematically equivalent to centralized training on pooled data, while retaining SL's communication advantage of transmitting only cut-layer activations. TL satisfies \emph{accuracy losslessness} and \emph{communication efficiency}, but transmits activations and gradients in plaintext, offering no privacy guarantee against a semi-honest or compromised server.

TL++ extends TL with two modes. Base mode reproduces TL's plaintext protocol for trusted environments. Secure mode adds additive-secret-sharing-based multi-party computation (MPC)~\cite{damgard_multiparty_2012} through a non-colluding helper server, following common PPML designs~\cite{mohassel_secureml_2017,mohassel_aby3_2018,wagh_securenn_2019}, so cut-layer activations and gradients are split rather than exposed. The helper provides privacy separation, not optimization: it owns no training data, does not define the objective, and only processes the second masked share. Without this non-colluding holder, the orchestrator would receive or reconstruct plaintext tensors, reducing secure TL++ to the trusted base setting.

The cut also does not itself cause accuracy loss: it only assigns the upper-model forward pass and corresponding backpropagation to the server partition. If that partition evaluates the same $f_{\text{server}}$ and returns the centralized cut-layer gradient, the node-side update remains centralized-gradient equivalent by the chain rule. Thus, $f_{\text{server}}$ may be nonlinear in base mode. The linearity condition is narrower: the lightweight secure protocol can evaluate shares independently without changing the result only when the sharewise path is linear or affine. For a cut activation $a=a^{(1)}+a^{(2)}$, a linear/affine map $L$ satisfies $L(a^{(1)}+a^{(2)})=L(a^{(1)})+L(a^{(2)})$ once the bias is handled once or shared consistently, and its Jacobian is independent of the hidden plaintext. ReLU, pooling, normalization, and softmax generally violate this identity; secure TL++ can still support them with secure nonlinear protocols, otherwise sharewise evaluation is an approximation. The current additive-sharing protocol therefore makes only some cuts exact; it does not require linear server networks in all TL++ architectures.

This integration raises three corresponding challenges.

\begin{enumerate}
    \item \textbf{Accuracy:} The server partition must return the same cut-layer gradient that centralized backpropagation would produce. Plain delegation of upper-layer backpropagation does not break accuracy losslessness; approximating or incorrectly evaluating the server-side computation under sharing does.
    \item \textbf{Communication:} Secret sharing doubles cut-path activation traffic and adds helper/server coordination, so the useful operating point depends on cut depth and synchronization cost.
    \item \textbf{Security:} The protocol must keep intermediate activations and cut-layer gradients shared while reconstructing only values needed for loss computation.
\end{enumerate}

Table~\ref{tab:comparison} positions TL++ relative to prior paradigms. The main contributions are as follows.

\newcommand{\cmark}{\ensuremath{\checkmark}}
\newcommand{\xmark}{\texttimes}
\newcommand{\pmark}{$\sim$}
\begin{table}[t]
    \caption{Comparison of distributed learning paradigms. \textbf{(A)}~Centralized-gradient equivalence under non-IID data. \textbf{(C)}~Communication efficiency vs.\ full parameter transmission.
    \textbf{(S)}~Protection for intermediate values under the stated threat model.
    \checkmark~satisfied; \texttimes~not satisfied; $\sim$~partial. For TL++ secure, the \checkmark{} entries assume semi-honest non-colluding servers and an exact additive-share evaluation path. In the lightweight implementation evaluated here, exactness holds when the operations applied independently to additive shares are linear/affine. Nonlinear server-side operations require additional secure nonlinear protocols; otherwise the corresponding secure run is approximate.}
    \label{tab:comparison}
    \small
    \setlength{\tabcolsep}{4pt}
    \begin{tabular}{lp{5cm}ccc}
    \toprule
    \textbf{Method} & \textbf{Transmission} & \textbf{(A)} & \textbf{(C)} & \textbf{(S)} \\
    \midrule
    FL~\cite{mcmahan_communication-efficient_2017}
      & Model params & \xmark & \xmark & \xmark \\
    CFL~\cite{sattler_clustered_2019}
      & Model params / clustered models & \pmark & \xmark & \pmark \\
    FL+Compress~\cite{lin_deep_2018,wangni_gradient_2018,alistarh_qsgd_2017,bernstein_signsgd_2018}
      & Compressed grads & \xmark & \pmark & \xmark \\
    FL+secure aggregation~\cite{bonawitz_practical_2017}
      & Encrypted params & \xmark & \xmark & \pmark \\
    \cmidrule(lr){1-5}
    SL~\cite{gupta_distributed_2018,vepakomma_split_2018}
      & Activations & \xmark & \cmark & \xmark \\
    SL+differential privacy
      & Noisy activations~\cite{dwork_differential_2006} & \pmark & \cmark & \pmark \\
    SL+homomorphic encryption
      & Encrypted activations~\cite{sav_cure_2024} & \xmark & \pmark & \cmark \\
    SplitFed~\cite{thapa_splitfed_2022}
      & Activations & \pmark & \cmark & \xmark \\
    \cmidrule(lr){1-5}
    TL~\cite{batbaatar_traversal_2025}
      & Activations & \cmark & \cmark & \xmark \\
    \midrule
    \textbf{TL++ (base)}
      & Activations & \cmark & \cmark & \xmark \\
    \textbf{TL++ (secure)}
      & Secret-shared activations; exact when sharewise server path is linear/affine & \cmark & \cmark & \cmark \\
    \bottomrule
    \end{tabular}
\end{table}

\begin{itemize}
    \item \textbf{A two-mode traversal-learning framework.} TL++ supports a base mode for trusted deployments and a secure mode for privacy-sensitive deployments. Both modes use traversal-learning virtual batches; secure mode adds a non-colluding helper server and additive-share message formats.

    \item \textbf{A condition for exact additive-share evaluation.} We show that TL virtual-batch forward and backward passes are exact over additive shares when the operations evaluated independently on shares are linear or affine. This is a protocol condition, not an architectural mandate: nonlinear $f_{\text{server}}$ is valid in base mode and in secure mode with secure nonlinear evaluation. In the current CNN implementation, exact secure evaluation corresponds to cut~3; secure cut~1 and cut~2 are reported as approximations because nonlinear server layers are evaluated without an added secure nonlinear protocol.

    \item \textbf{Activation-level privacy with limited reconstruction.} TL++ splits cut-layer activations and gradients between the orchestrator and helper. Only output values and label-dependent signals needed for loss and backpropagation are reconstructed; labels, metadata, collusion, and malicious deviations remain outside this guarantee unless additional mechanisms are added.

    \item \textbf{Communication analysis across cut points.} We analyze the cost of applying MPC to cut-layer activations rather than full model updates. Secure mode roughly doubles cut-path activation/gradient traffic relative to base TL, but deeper cuts substantially reduce the payload.

    \item \textbf{Implementation and evaluation.} We instantiate TL++ in a VGG-style~\cite{simonyan_very_2015} CIFAR-10~\cite{krizhevsky_learning_2009} setting and a BioGPT/\allowbreak{}PubMedQA~\cite{luo_biogpt_2022,jin_pubmedqa_2019} low-rank adaptation (LoRA)~\cite{hu_lora_2021} setting, covering image and biomedical natural language processing (NLP) workloads under the same split-training design.
\end{itemize}

The remainder of the paper reviews related work, presents the TL++ protocol, evaluates utility and communication cost, and discusses privacy assumptions and deployment limits.

\section{Related Work}
\label{sec:related}

We map prior work against TL++'s three concerns: \emph{accuracy losslessness} (A), \emph{communication efficiency} (C), and \emph{security of intermediate computations} (S). Table~\ref{tab:comparison} summarizes the landscape.

\subsection{Federated Learning}

FL trains without centralizing raw data by aggregating local updates, but it only partially meets the three concerns. On~(A), non-IID distributions can drive divergence between local and global objectives, with degradation tied to label skew~\cite{zhao_federated_2018}. SCAFFOLD~\cite{karimireddy_scaffold_2020}, FedProx~\cite{li_fedprox_2020}, FedNova~\cite{wang_tackling_2020}, FedDyn~\cite{acar_federated_2021}, MOON~\cite{li_model-contrastive_2021}, FedBN~\cite{li_fedbn_2021}, adaptive federated optimizers~\cite{reddi_adaptive_2021}, and personalized methods such as Ditto~\cite{li_ditto_2021} mitigate objective inconsistency, drift, or feature shift, but do not generally recover the exact pooled centralized mini-batch gradient used by TL++. On~(C), full-parameter exchange grows with model size~\cite{konecny_federated_2016,kairouz_advances_2021}. On~(S), model updates can reveal training samples~\cite{zhu_deep_2019}, and secure aggregation~\cite{bonawitz_practical_2017} adds overhead without protecting intermediate split-learning computations.

Clustered and multi-task variants address cases where heterogeneity reflects multiple legitimate tasks. CFL~\cite{sattler_clustered_2019} recursively partitions clients using update-direction similarity, improving personalization when a single pooled objective is inappropriate. This clarifies the scope of TL++: a single TL++ job assumes compatible labels, model semantics, and conditional distributions; CFL-style compatibility discovery could be used upstream to decide which clients should share a TL++ model.

\subsection{Communication-Efficient Distributed Learning}

Communication-efficient learning reduces bytes through compression, lower synchronization frequency, or network partitioning. Deep gradient compression~\cite{lin_deep_2018}, sparsification~\cite{wangni_gradient_2018}, QSGD~\cite{alistarh_qsgd_2017}, sign-based optimization~\cite{bernstein_signsgd_2018}, FedPAQ~\cite{reisizadeh_fedpaq_2020}, and local SGD~\cite{stich_local_2019} reduce FL traffic but add approximation or staleness that can compound non-IID divergence, leaving~(A) and~(S) unresolved. SL and TL~\cite{batbaatar_traversal_2025} instead send cut-layer activations, with savings that grow when the cut is earlier. Asynchronous and buffered FL reduce idle time~\cite{xie_asynchronous_2019,nguyen_fedbuff_2022} but remain model-update protocols. SplitFed~\cite{thapa_splitfed_2022} parallelizes client-side split computation and aggregates client models, yet without cross-client virtual batches it does not recover pooled centralized-gradient equivalence under non-IID data. TL++ adds additive sharing, so its net benefit depends not only on activation size but also on node-model synchronization, node-gradient sharing, helper relay, serialization, and synchronization cadence; efficient operating points are cut-dependent.

\subsection{Split Learning}

SL, proposed by Gupta and Raskar~\cite{gupta_distributed_2018} and Vepakomma et al.~\cite{vepakomma_split_2018}, partitions a network at a cut layer: clients send activations, the server completes the forward pass and loss, then returns cut-layer gradients. Although used in privacy-sensitive domains such as healthcare~\cite{vepakomma_split_2018}, standard SL processes one client's samples per pass, preventing cross-participant gradient aggregation and large-batch benefits under heterogeneity~\cite{li_federated_2020}. Its privacy is also limited: gradients can reconstruct inputs~\cite{zhu_deep_2019}, activations are vulnerable to model inversion~\cite{he_model_2019}, and malicious servers can optimize reconstructions even at deep cuts~\cite{pasquini_unleashing_2021}. NoPeek~\cite{vepakomma_nopeek_2020} reduces input--smashed-data dependence but not the cryptographic non-observability targeted by TL++ secure mode. TL~\cite{batbaatar_traversal_2025} supplies virtual batches; SplitFed~\cite{thapa_splitfed_2022} supplies parallelism; TL++ adds protection for exposed activations and gradients.

\subsection{Privacy-Preserving Machine Learning}

Three main cryptographic approaches protect machine learning computations, but each falls short of satisfying all three concerns simultaneously in this setting. Differential privacy (DP)~\cite{dwork_differential_2006} injects calibrated noise into model updates, providing formal guarantees but with a privacy-utility trade-off that is unfavorable at high privacy budgets, particularly under non-IID data~\cite{kairouz_advances_2021}---degrading~(A) alongside~(S). Homomorphic encryption (HE)~\cite{gentry_fully_2009} supports computation on ciphertext. Secure neural-network inference systems such as CryptoNets~\cite{gilad_bachrach_cryptonets_2016}, GAZELLE~\cite{juvekar_gazelle_2018}, and Delphi~\cite{mishra_delphi_2020} show how HE can be combined with circuit-based or protocol-level optimizations, but they primarily target private inference rather than virtual-batch training. Recent work has applied HE directly to SL: CURE~\cite{sav_cure_2024} encrypts server-side model parameters using the CKKS approximate-arithmetic HE scheme~\cite{cheon_homomorphic_2017}, protecting intermediate activations and labels from a semi-honest server while retaining client-side plaintext computation. CURE achieves strong~(S) guarantees and partially satisfies~(C) by limiting encryption to the server-side portion only, but the HE computational overhead remains substantial for deep networks with many training iterations, limiting practical scalability and compromising~(A) when accuracy-utility trade-offs are applied. MPC~\cite{damgard_multiparty_2012} enables multiple parties to jointly compute functions over private inputs without revealing them. Among MPC primitives, additive secret sharing is especially well-suited to the linear operations that dominate gradient aggregation, as linear functions compose correctly over shares without reconstruction (see Section~\ref{sec:pre}). Bonawitz et al.~\cite{bonawitz_practical_2017} demonstrate practical secure aggregation for FL using additive secret sharing, protecting model updates from a semi-honest server, but this does not extend to intermediate split-learning activations and gradients. TL++ adopts additive secret sharing on cut-layer activations and gradients directly, confining reconstruction to the output layer where loss computation requires it---satisfying~(S) without sacrificing (A) or~(C), and aligning with two- and three-party PPML protocols that rely on non-collusion or honest-majority assumptions~\cite{mohassel_secureml_2017,mohassel_aby3_2018,wagh_securenn_2019}. Unlike HE-based approaches, additive secret sharing introduces no ciphertext expansion for linear operations, and its linear overhead scales with activation size rather than model depth; more general private neural-network systems such as SecureNN and ABY3 support nonlinear layers by adding interaction and mixed protocols~\cite{wagh_securenn_2019,mohassel_aby3_2018}. This makes TL++ complementary to, rather than a replacement for, secure aggregation in federated settings.

\subsection{Traversal Learning}

TL was designed to satisfy~(A) and~(C): unlike FL, it does not average independently trained local models, and unlike SL, it does not process clients as separate gradient updates. Instead, TL assembles virtual batches across nodes and executes one forward--backward pass, producing updates equivalent to centralized mini-batch training on pooled data while leaving raw data local. Batbaatar et al.~\cite{batbaatar_traversal_2025} report gains over FL and SL across IID, non-IID, text, and medical datasets, especially under label skew.

TL++ uses TL as base mode and adds secure mode with a non-colluding helper: activations and gradients are split so neither server sees plaintext intermediate values. When the sharewise path is linear/affine, the secure update is numerically identical to base mode while satisfying~(S) at the cost of additional activation/share traffic. To our knowledge, TL++ is the first deployable framework to combine accuracy-lossless gradient aggregation, split-learning-level communication, and configurable protection of intermediate training signals.

\section{Preliminaries}
\label{sec:pre}

\subsection{Problem Formulation}

Consider a distributed system of $N$ nodes, where node $i$ holds a local dataset $\mathcal{D}_i = \{(\mathbf{x}^{(i)}_j, y^{(i)}_j)\}_{j=1}^{n_i}$ of $n_i$ labeled samples. Let $n = \sum_{i=1}^{N} n_i$ denote the total number of samples across all nodes, and let $\mathcal{D} = \bigcup_{i=1}^{N} \mathcal{D}_i$ denote the (virtual) pooled dataset. Raw data never leaves its originating node.

The global training objective is to minimize the empirical loss over the pooled dataset:
\begin{equation}
  \min_{\boldsymbol{\theta}} \; \mathcal{L}(\boldsymbol{\theta}) = \frac{1}{n} \sum_{i=1}^{N} \sum_{j=1}^{n_i} \ell\!\left(f\!\left(\mathbf{x}^{(i)}_j;\,\boldsymbol{\theta}\right),\, y^{(i)}_j\right),
  \label{eq:global_loss}
\end{equation}
where $f(\cdot\,;\boldsymbol{\theta})$ is the neural network with parameters $\boldsymbol{\theta}$ and $\ell$ is a per-sample loss (e.g., cross-entropy).

A standard mini-batch SGD step on a batch $\mathcal{B} \subseteq \mathcal{D}$ of size $B$ computes:
\begin{equation}
  \boldsymbol{\theta} \leftarrow \boldsymbol{\theta} - \eta \cdot \frac{1}{B} \sum_{(\mathbf{x},y) \in \mathcal{B}} \nabla_{\boldsymbol{\theta}}\,\ell\!\left(f(\mathbf{x};\boldsymbol{\theta}),\,y\right),
  \label{eq:sgd}
\end{equation}
where $\eta$ is the learning rate. Centralized training draws $\mathcal{B}$ uniformly from $\mathcal{D}$, naturally mixing samples from all nodes in every update.

Under non-IID data, $P(\mathcal{D}_i)$ may differ from $P(\mathcal{D})$, so local-objective methods such as standard FL can compute gradients biased relative to Equation~\eqref{eq:global_loss}, degrading convergence~\cite{zhao_federated_2018,karimireddy_scaffold_2020}. Here, \emph{accuracy losslessness} is a gradient-level requirement: the distributed protocol should produce the same update as Equation~\eqref{eq:sgd} on the same uniformly drawn batch $\mathcal{B}$, under matched initialization, batch order, optimizer state, and arithmetic. Final test accuracy may still vary across seeds, hardware, and regularization.

\subsection{Split Learning Architecture}

SL partitions the network $f$ at a designated \emph{cut layer} $c$ into two sub-networks: a \emph{node model} $f_{\text{node}}$ comprising layers $1$ through $c$, and a \emph{server model} $f_{\text{server}}$ comprising layers $c+1$ through the output. Each node $i$ holds a local copy of $f_{\text{node}}$ with parameters $\boldsymbol{\theta}_{\text{node}}$, while the server maintains $f_{\text{server}}$ with parameters $\boldsymbol{\theta}_{\text{server}}$.

For a sample $\mathbf{x}$, the forward pass proceeds as follows. The node computes the \emph{cut-layer activation}:
\begin{equation}
  \mathbf{a} = f_{\text{node}}(\mathbf{x};\,\boldsymbol{\theta}_{\text{node}}),
  \label{eq:activation}
\end{equation}
and transmits $\mathbf{a}$ to the server. The server completes the forward pass, computing the output $\hat{y} = f_{\text{server}}(\mathbf{a};\,\boldsymbol{\theta}_{\text{server}})$, evaluates the loss $\ell(\hat{y}, y)$, and backpropagates to obtain the
\emph{cut-layer gradient}:
\begin{equation}
  \mathbf{g} = \frac{\partial\,\ell}{\partial\,\mathbf{a}}.
  \label{eq:cutgrad}
\end{equation}
The server returns $\mathbf{g}$ to the node, which uses it to backpropagate through $f_{\text{node}}$ and update $\boldsymbol{\theta}_{\text{node}}$.

The communication cost per sample is $O(|\mathbf{a}|)$, proportional to the activation dimension at the cut layer, rather than $O(|\boldsymbol{\theta}|)$ as in FL. This advantage grows as the cut layer is placed earlier in the network or as the total model size increases~\cite{gupta_distributed_2018,vepakomma_split_2018}.

\subsection{Secure Multiparty Computation}

MPC lets parties jointly evaluate $h(\mathbf{v}_1, \ldots, \mathbf{v}_k)$ over private inputs without any party learning more than its own input and the output~\cite{yao_protocols_1982,damgard_multiparty_2012}.

\paragraph{Additive Secret Sharing.}
TL++ employs two-party additive secret sharing as its MPC primitive~\cite{damgard_multiparty_2012,mohassel_secureml_2017}. Formally, additive sharing is usually defined over a finite field or ring. In the implementation, tensors are represented with finite-precision arithmetic, and masks are sampled from the corresponding numerical domain. For readability, we write the operation over real-valued tensors: for a value $v \in \mathbb{R}^d$, the \emph{secret sharing} operation samples a random mask $r$ from the implementation domain and defines two \emph{shares}:
\begin{equation}
  v_1 = r, \qquad v_2 = v - r,
  \label{eq:sharing}
\end{equation}
so that $v_1 + v_2 = v$ up to the chosen arithmetic representation. Under the ideal finite-domain sharing model, each share is individually independent of $v$; reconstruction requires both shares.

\paragraph{Linearity of Additive Secret Sharing.}
The key property that makes additive secret sharing compatible with gradient aggregation is its linearity: for any values $u, v \in \mathbb{R}^d$ with shares $(u_1, u_2)$ and $(v_1, v_2)$, and any scalar $\alpha \in \mathbb{R}$:
\begin{align}
  (u + v)_k &= u_k + v_k, \quad k \in \{1, 2\}, \label{eq:lin_add}\\
  (\alpha\, v)_k &= \alpha\, v_k, \quad k \in \{1, 2\}. \label{eq:lin_scale}
\end{align}
That is, sums and scalar multiples of shared values can be computed by each party independently on its own shares, without communication and without reconstruction. Reconstruction of the result requires only the standard $v_1 + v_2 = v$ step.

This property makes the lightweight secure TL++ path exact when it processes shares without reconstructing the cut activation. The claim is not architectural: in base mode $f_{\text{server}}$ may be any differentiable subnetwork, and secure mode may also use nonlinear $f_{\text{server}}$ if paired with secure nonlinear computation. The condition concerns only sharewise additive masking. For a linear map $L$, $L(\mathbf{a}_1+\mathbf{a}_2)=L(\mathbf{a}_1)+L(\mathbf{a}_2)$; affine biases can be added once or shared consistently. Thus, with a linear/affine sharewise path and no optional noise, masking introduces no approximation and the returned cut-layer gradient matches centralized backpropagation.

Nonlinear operations evaluated independently on shares are different. ReLU, pooling, normalization, and attention softmax generally violate $f(\mathbf{a}_1+\mathbf{a}_2)=f(\mathbf{a}_1)+f(\mathbf{a}_2)$, and their backward masks or Jacobians depend on the combined plaintext activation. Exact secure use therefore requires secure comparison, lookup, garbled-circuit, polynomial-approximation, or hybrid-MPC computation; otherwise the nonlinear sharewise path is an explicit approximation. For ReLU, the centralized operation $\operatorname{ReLU}(a^{(1)}+a^{(2)})$ differs from $\operatorname{ReLU}(a^{(1)})+\operatorname{ReLU}(a^{(2)})$; if $a^{(1)}=1$ and $a^{(2)}=-2$, the former is 0 and the latter is 1. Backpropagation has the same problem because ReLU masks depend on the combined plaintext. Any loss of centralized-gradient equivalence therefore comes from approximating nonlinear operations under masking, not from the server-side subnetwork itself.

\paragraph{Security Model.}
Following common two-server MPC and PPML assumptions~\cite{damgard_multiparty_2012,mohassel_secureml_2017}, TL++ assumes a \emph{semi-honest} (honest-but-curious) adversary: each party follows the protocol faithfully but attempts to infer private values from observed messages. Under this model, additive secret sharing guarantees that neither the orchestrator (holding $v_1$) nor the helper server (holding $v_2$) can recover $v$ without the other's share, provided they do not collude. The guarantee is scoped to shared cut-layer activations and gradients; it does not hide labels, reconstructed outputs, timing, batch-membership metadata, or values intentionally revealed for loss computation. The non-collusion assumption between orchestrator and helper is a standard deployment assumption in two-party MPC~\cite{damgard_multiparty_2012, bonawitz_practical_2017} and is discussed further in Section~\ref{sec:discussion}.

\section{The Proposed TL++}
\label{sec:method}

\subsection{System Architecture}

TL++ has $N$ \emph{nodes}, one \emph{orchestrator}, and one \emph{helper server}, following non-colluding PPML deployment patterns~\cite{mohassel_secureml_2017,mohassel_aby3_2018,wagh_securenn_2019}. Node $i$ holds $\mathcal{D}_i$, a copy of $f_{\text{node}}$, and local forward/backward computation. The orchestrator maintains $f_{\text{server}}$, constructs virtual batches, and distributes parameters and sample indices.

The server model remains a modeling choice: in base mode it may be nonlinear, while the exact lightweight secure implementation restricts only the operations evaluated independently on additive shares. A linear/affine sharewise path is exact; nonlinear paths require secure nonlinear protocols or are approximate. The helper maintains the corresponding sharewise path and participates in additive-secret-sharing MPC~\cite{damgard_multiparty_2012}, receiving activation shares from nodes and exchanging partial outputs and label-dependent output-gradient signals with the orchestrator. Its role is privacy separation: it holds the second share so no single server can reconstruct cut-layer activations or gradients. If the orchestrator is trusted with those tensors, the helper is omitted and TL++ runs in base mode.

Figure~\ref{fig:architecture} shows the topology, per-batch messages, and secure flow. Nodes split each cut-layer activation between the orchestrator and helper. Messages are: \textbf{A}, local sample counts; \textbf{B}, model parameters and batch indices; \textbf{C}, activation shares; \textbf{D}, cut-layer gradient shares; and \textbf{E}, node-model gradients for aggregation. A dedicated inter-server Phase~2 exchange lets the helper send its partial output, the orchestrator reconstruct predictions for loss, and the orchestrator return the label-dependent output-gradient signal (Section~\ref{sec:secure_proto}). Panel~C shows activation sharing, independent sharewise propagation, and the shared gradient relation $\mathbf{G}_m = \mathbf{G}^{(1)}_m + \mathbf{G}^{(2)}_m$.

TL++ operates in two modes sharing this architecture:
\begin{itemize}
  \item \textbf{Base mode:} Nodes transmit plaintext activations to the orchestrator only; no helper server is required. This reproduces TL's protocol exactly and is suited to trusted deployments.
  \item \textbf{Secure mode:} Nodes split activations using Equation~\eqref{eq:sharing} and send one share to each server. The helper is needed only to protect cut-layer tensors from any single semi-honest server. Linear/affine server computations are exact on shares; nonlinear computations require secure nonlinear evaluation or become approximate. The orchestrator reconstructs output values for label-based loss and distributes the resulting output-gradient signal; cut-layer activations and gradients remain shared under the semi-honest non-collusion assumption.
\end{itemize}

\begin{figure}[ht]
  \centering
     \includegraphics[width=\linewidth]{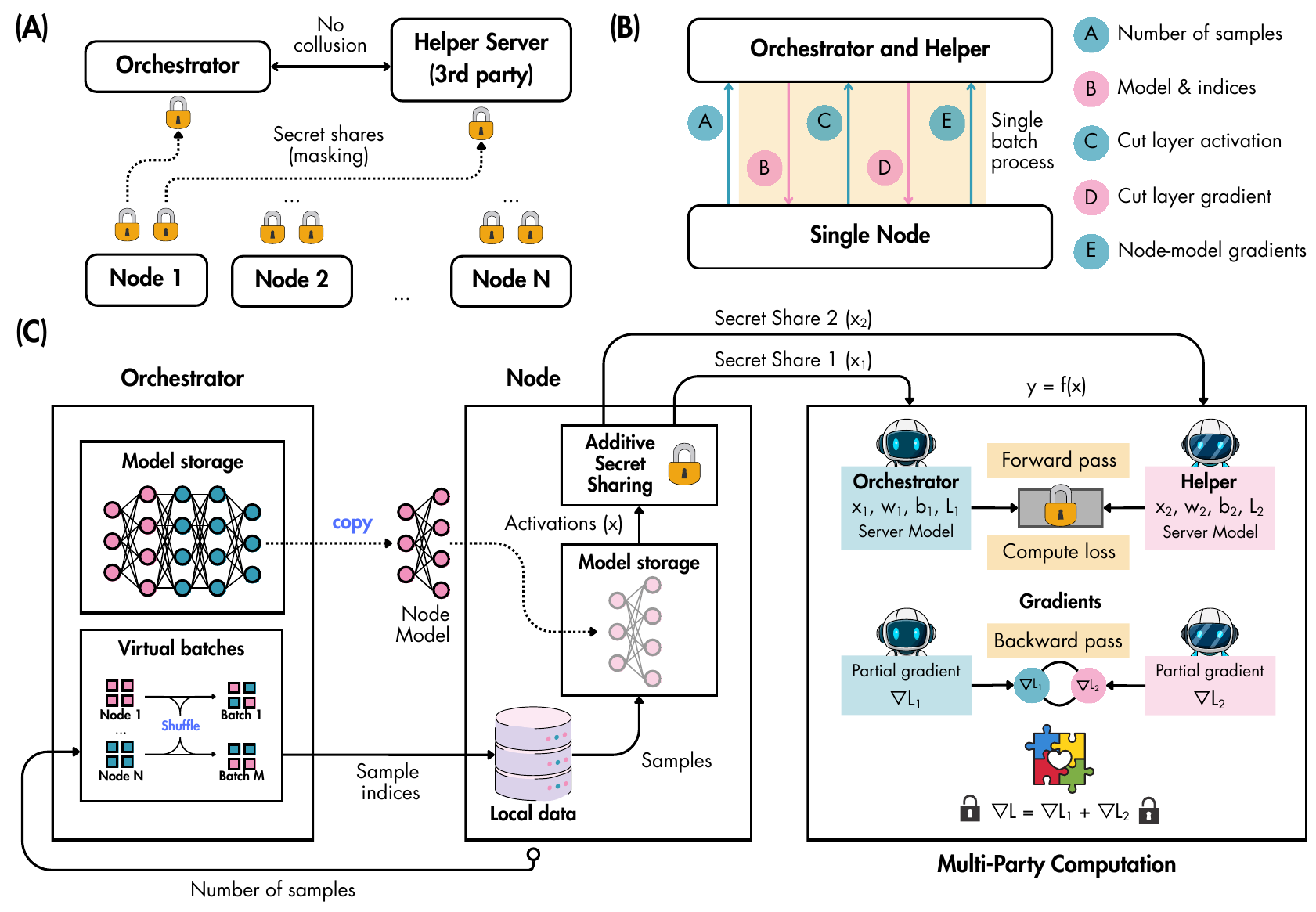}%
  {%
  }
  \caption{TL++ architecture.
  \textbf{(A)}~Three-party topology: $N$ nodes send additive secret shares of cut-layer activations to the orchestrator and a non-colluding helper server, so neither party observes plaintext intermediate values. \textbf{(B)}~Per-batch communication: messages~A--E denote sample count reporting, model parameter and index distribution, cut-layer activation shares, cut-layer gradient shares, and node-model gradient aggregation, respectively. An additional inter-server coordination channel (not shown) is used in secure mode for output reconstruction. \textbf{(C)}~Secure protocol: nodes split activations into shares $(x_1, x_2)$ via additive secret sharing; orchestrator and helper independently propagate their shares through the linear/affine server model used by the exact secure protocol; the orchestrator reconstructs outputs for loss computation and shares the output-gradient signal needed to form partial cut-layer gradients
  $\mathbf{G}_m = \mathbf{G}^{(1)}_m + \mathbf{G}^{(2)}_m$.}
  \label{fig:architecture}
  \Description{Three-panel diagram showing (A) three-party system topology with orchestrator, helper server, and N nodes connected via secret shares, (B) the per-batch SL message flow labeled with messages A through E, and (C) the secure additive-secret-sharing-based forward and backward pass protocol with partial gradients.}
\end{figure}

\subsection{Model Architecture and Cut-Layer Constraint}
\label{sec:model}

TL++ experiments use a custom VGG-style CNN~\cite{simonyan_very_2015} designed for CIFAR-10 ($32 \times 32$ red--green--blue (RGB) images)~\cite{krizhevsky_learning_2009}. The full network consists of three convolutional blocks followed by fully connected classification layers:
\begin{itemize}
    \item \textbf{Block~1:} Two Conv2D layers ($3 \to 64 \to 64$ channels, $3\times3$ kernels, padding~1), each followed by ReLU activation, then $2\times2$ MaxPool --- output: $64 \times 16 \times 16$.
    \item \textbf{Block~2:} Two Conv2D layers ($64 \to 128 \to 128$ channels), each followed by ReLU, then $2\times2$ MaxPool --- output: $128 \times 8 \times 8$.
    \item \textbf{Block~3:} Two Conv2D layers ($128 \to 256 \to 256$ channels), each followed by ReLU, then $2\times2$ MaxPool --- output: $256 \times 4 \times 4$.
    \item \textbf{Classifier:} Flatten $\to$ fully connected (FC) layer ($4096 \to 512$) $\to$ ReLU $\to$ Dropout(0.5)~\cite{srivastava_dropout_2014} $\to$ FC layer ($512 \to C$).
\end{itemize}
All weights are initialized with Xavier uniform initialization~\cite{glorot_understanding_2010}. The implementation
exposes three cut-layer options:
\begin{itemize}
  \item \textbf{cut~1} (default): after Block~1 MaxPool; activation dimension $64 \times 16 \times 16 = 16{,}384$.
  \item \textbf{cut~2}: after Block~2 MaxPool; activation dimension $128 \times 8 \times 8 = 8{,}192$.
  \item \textbf{cut~3}: after the first FC layer (pre-Dropout); activation dimension $512$.
\end{itemize}

\paragraph{Linearity condition for exact additive-share evaluation.} As established in Section~\ref{sec:pre}, the secure protocol's gradient decomposition (Equation~\eqref{eq:grad_decomp}) is algebraically exact when the operations evaluated independently on additive shares are linear/affine. This condition should be read narrowly. It is a condition for the lightweight masking protocol, not a mandate that the model designer choose a linear $f_{\text{server}}$. In ordinary base mode, any differentiable server-side subnetwork can be trained without affecting accuracy losslessness, provided that the virtual batch, parameters, optimizer state, and returned cut-layer gradient match the centralized computation.

The distinction matters because secure mode hides the cut activation by keeping it split. The two servers do not reconstruct the activation before evaluating the sharewise server path; each server evaluates its own share and the partial results are combined. If that sharewise path is linear/affine, masking itself introduces no approximation or accuracy-losslessness concern. If it contains nonlinear functions and those functions are applied separately to shares, the represented function and gradient may differ from the centralized computation. A nonlinear $f_{\text{server}}$ is therefore still a valid modeling choice, but exact secure use of such a model requires additional secure nonlinear protocols; otherwise the run should be described as an approximation.

For cut~1 and cut~2, the server model includes convolutional layers with ReLU activations, so the lightweight additive-share protocol alone is not exact for those cuts. For \textbf{cut~3}, the server model reduces to a single linear projection $\mathbb{R}^{512} \to \mathbb{R}^{C}$ (the final fully connected layer without nonlinear activation), so masking introduces no loss under the exact protocol. Accordingly, this CNN uses cut~3 for exact secure evaluation, whereas base mode imposes no such restriction and supports all three cut points.

\begin{remark}[Approximation regime for cut~1--2 in secure mode]
For secure cut~1 or cut~2, ``approximation'' means that nonlinear server layers are evaluated separately on additive shares. It does not mandate linear $f_{\text{server}}$ in general or imply that server-side backpropagation causes accuracy loss. If the sharewise path is linear/affine, additive masking preserves the function and gradient; the approximation arises only when nonlinear operations such as ReLU are applied without secure nonlinear protocols. Proposition~\ref{prop:secure_equiv} then does not apply. We report secure cut~1--2 only for sensitivity; strict secure-gradient correctness in this architecture requires cut~3 or secure nonlinear layers.
\end{remark}

\subsection{Virtual Batch Construction}

Virtual batch construction, inherited from Traversal Learning~\cite{batbaatar_traversal_2025}, is the mechanism by which TL++ achieves gradient equivalence to centralized mini-batch training under non-IID data. Let each node $i$ hold $n_i$ samples indexed by a local index set $\mathcal{I}_i$. Prior to training, the orchestrator collects the per-node sample counts $\{n_i\}_{i=1}^N$ (message~A) and constructs a global index set $\mathcal{I} = \bigsqcup_{i=1}^{N} \mathcal{I}_i$ of cardinality $n$.

For each training epoch, the orchestrator randomly permutes $\mathcal{I}$ and partitions it into $M = \lceil n / B \rceil$ non-overlapping \emph{virtual batches} $\mathcal{B}_1, \ldots, \mathcal{B}_M$, each of target size $B$ except possibly the final batch. Each virtual batch $\mathcal{B}_m$ contains indices drawn from potentially all $N$ nodes, mirroring a centralized mini-batch drawn from the pooled dataset $\mathcal{D}$ under the same without-replacement epoch shuffle. When the final batch is smaller than $B$, the normalization uses its actual size $|\mathcal{B}_m|$.

For each batch $\mathcal{B}_m$, the orchestrator decomposes it by node: $\mathcal{B}_m^{(i)} = \mathcal{B}_m \cap \mathcal{I}_i$, and sends each node $i$ the sub-index set $\mathcal{B}_m^{(i)}$ (message~B). Node $i$ loads only the corresponding local samples $\{\mathbf{x}^{(i)}_j\}_{j \in \mathcal{B}_m^{(i)}}$, computes their cut-layer activations, and returns them. The orchestrator concatenates activations from all nodes to form the full virtual-batch activation matrix $\mathbf{A}_m \in \mathbb{R}^{B \times d}$, where $d$ is the cut-layer dimension.

\paragraph{Label handling.}
Algorithm~\ref{alg:base} and Section~\ref{sec:secure_proto} assume the orchestrator has the virtual-batch labels $\mathbf{Y}_m$ for loss computation. Node $i$ may attach labels $\{y^{(i)}_j\}_{j\in\mathcal{B}_m^{(i)}}$ to its activation message or send them over the same authenticated channel. This does not centralize raw inputs, but it is an explicit label-disclosure assumption: the orchestrator may observe labels, node membership, and batch participation unless additional anonymization or batching is used. These metadata are outside the activation-sharing guarantee. Section~\ref{sec:label_u_shape} discusses U-shaped variants that keep labels local~\cite{vepakomma_split_2018}.

\begin{proposition}[Gradient Equivalence]
    \label{prop:equiv}
    Let $\mathbf{A}_m$ be the virtual-batch activation matrix for batch $\mathcal{B}_m$ with the random shuffling described above. Conditional on the same model parameters, optimizer state, and batch indices, the base TL++ server-side gradient computed from $\mathbf{A}_m$ is identical to the gradient that centralized mini-batch SGD would compute on $\mathcal{B}_m$. Over the epoch shuffle, this gradient has the same sampling distribution as centralized mini-batch SGD on the pooled dataset $\mathcal{D}$.
\end{proposition}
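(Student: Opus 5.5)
The plan is to prove the two claims separately: a deterministic identity conditional on the batch, and then a distributional statement over the epoch shuffle.

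\textbf{Deterministic part.} Fix $\boldsymbol{\theta}=(\boldsymbol{\theta}_{\text{node}},\boldsymbol{\theta}_{\text{server}})$, the optimizer state, and $\mathcal{B}_m$. Since every node holds an identical copy of $f_{\text{node}}$ (distributed in message~B), node $i$ computes $\mathbf{a}_j=f_{\text{node}}(\mathbf{x}_j;\boldsymbol{\theta}_{\text{node}})$ for each $j\in\mathcal{B}_m^{(i)}$. The sets $\mathcal{B}_m^{(i)}=\mathcal{B}_m\cap\mathcal{I}_i$ partition $\mathcal{B}_m$ because $\mathcal{I}$ is a disjoint union. Hence the concatenated matrix $\mathbf{A}_m$ equals, up to a fixed row permutation, the matrix a centralized trainer would form by applying $f_{\text{node}}$ to the pooled samples of $\mathcal{B}_m$. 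Each per-sample computation is independent of the others, so there is no cross-sample coupling to worry about in $f_{\text{node}}$.

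Next I write the batch loss as
\[
\mathcal{L}_m=\frac{1}{|\mathcal{B}_m|}\sum_{j\in\mathcal{B}_m}\ell\bigl(f_{\text{server}}(\mathbf{a}_j;\boldsymbol{\theta}_{\text{server}}),y_j\bigr).
\]
A sum over $j$ is invariant under permuting rows together with their labels $\mathbf{Y}_m$. Therefore the server gradient $\nabla_{\boldsymbol{\theta}_{\text{server}}}\mathcal{L}_m$ and the per-row cut gradients $\mathbf{g}_j=\partial\mathcal{L}_m/\partial\mathbf{a}_j$ coincide with the centralized ones. Returning $\mathbf{g}_j$ to the owning node and applying the chain rule gives
\[
\nabla_{\boldsymbol{\theta}_{\text{node}}}\mathcal{L}_m=\sum_i\sum_{j\in\mathcal{B}_m^{(i)}}J_{\boldsymbol{\theta}_{\text{node}}}(\mathbf{a}_j)^{\top}\mathbf{g}_j.
\]
The message-E aggregation is exactly this sum, which is the centralized node-side gradient, and the normalization uses the actual $|\mathcal{B}_m|$. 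Equal gradients plus equal optimizer state then give equal updates in Equation~\eqref{eq:sgd}.

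\textbf{Distributional part.} The epoch permutation of $\mathcal{I}$ is a uniformly random permutation of the $n$ pooled indices, which is exactly the centralized without-replacement shuffle of $\mathcal{D}$. The partition into $\mathcal{B}_1,\ldots,\mathcal{B}_M$ is a deterministic function of that permutation. The joint law of $(\mathcal{B}_1,\ldots,\mathcal{B}_M)$ therefore matches the centralized one. Composing with the deterministic identity from the first part, conditionally on parameters, gives equality of the sampling distributions of the gradients.

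\textbf{Main obstacle.} The delicate step is the claim that the row permutation is harmless. It requires $f_{\text{server}}$ and $\ell$ to act per sample, so that the loss is a symmetric sum. Operations that couple samples within a batch, such as batch normalization in the server partition, would only be permutation-equivariant and would need to see the whole batch. That is exactly what the concatenated $\mathbf{A}_m$ provides, so I would treat that case by noting that equivariance suffices. The argument also needs the stated assumptions to hold exactly: matched arithmetic, so that floating-point summation order is either ignored or fixed, and dropout masks drawn identically.
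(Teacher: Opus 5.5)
Your proposal is correct and follows essentially the paper's route: the virtual-batch shuffle produces the same index sets as a centralized without-replacement shuffle, so the samples, cut activations, and server-side forward and backward passes coincide, and the law of the batches over the epoch matches the centralized one. You are more explicit than the paper's sketch, which simply asserts concatenation in batch order and never treats the node side: your permutation-invariance argument covers the node-ordered concatenation $[\mathbf{A}^{(1)}\,\|\,\cdots\,\|\,\mathbf{A}^{(N)}]$ of Algorithm~\ref{alg:base}; note, however, that your plain sum $\sum_i\sum_{j\in\mathcal{B}_m^{(i)}}J_{\boldsymbol{\theta}_{\text{node}}}(\mathbf{a}_j)^{\top}\mathbf{g}_j$ agrees with the algorithm's size-weighted aggregation $\frac{1}{B}\sum_i b_i\,\Delta\boldsymbol{\theta}^{(i)}_{\text{node}}$ only if each $\Delta\boldsymbol{\theta}^{(i)}_{\text{node}}$ is read as a node-local mean gradient, not as the raw backpropagation of the slice $\mathbf{G}_m[\mathcal{B}_m^{(i)},:]$.
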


\begin{proof}[Proof sketch]
    The virtual batch index construction samples the same index sets that a centralized without-replacement epoch shuffle would produce. Since the mapping from indices to samples is deterministic, the batch $\{\mathbf{x}_j, y_j\}_{j \in \mathcal{B}_m}$ is identical to the corresponding centralized mini-batch. The node models compute the same cut-layer activations that the centralized model would compute for those samples, and the orchestrator concatenates them in batch order before applying the same server model and loss. Therefore the subsequent forward and backward passes are algebraically identical to centralized SGD on the same batch; no approximation is introduced in base mode.
\end{proof}

In base mode, the concatenated activation matrix $\mathbf{A}_m$ is formed in plaintext at the orchestrator. In secure mode, the orchestrator holds the share matrix $\mathbf{A}^{(1)}_m$ and the helper holds $\mathbf{A}^{(2)}_m$, with $\mathbf{A}^{(1)}_m + \mathbf{A}^{(2)}_m = \mathbf{A}_m$; concatenation and aggregation proceed share-wise by the linearity of additive secret sharing (Equations~\eqref{eq:lin_add}--\eqref{eq:lin_scale}).

\subsection{Standard (Base) Protocol}

Algorithm~\ref{alg:base} summarizes the base-mode training protocol for a single virtual batch $\mathcal{B}_m$.

\begin{algorithm}[t]
    \caption{TL++ Base Mode --- Single Batch $\mathcal{B}_m$}
    \label{alg:base}
    \begin{algorithmic}[1]
    \Require Virtual batch indices $\mathcal{B}_m$; current parameters $\boldsymbol{\theta}_{\text{node}}, \boldsymbol{\theta}_{\text{server}}$; learning rate $\eta$; sub-batch sizes $b_i = |\mathcal{B}_m^{(i)}|$.
    \For{each node $i$ in parallel}
      \State Receive sub-indices $\mathcal{B}_m^{(i)}$ from orchestrator.
      \State Compute activations: $\mathbf{A}^{(i)} \leftarrow f_{\text{node}}\!\left(\{\mathbf{x}^{(i)}_j\}_{j \in \mathcal{B}_m^{(i)}}; \boldsymbol{\theta}_{\text{node}}\right)$.
      \State Send $\mathbf{A}^{(i)}$ to orchestrator.
    \EndFor
    \State \textbf{Orchestrator:} Concatenate $\mathbf{A}_m \leftarrow [\mathbf{A}^{(1)} \,\|\, \cdots \,\|\, \mathbf{A}^{(N)}]$.
    \State \hspace{2.3em} Forward pass: $\hat{\mathbf{Y}} \leftarrow f_{\text{server}}(\mathbf{A}_m;\,\boldsymbol{\theta}_{\text{server}})$.
    \State \hspace{2.3em} Compute loss: $L \leftarrow \ell(\hat{\mathbf{Y}},\,\mathbf{Y}_m)$.
    \State \hspace{2.3em} Backward pass: $\mathbf{G}_m \leftarrow \partial L / \partial \mathbf{A}_m$; \; $\Delta\boldsymbol{\theta}_{\text{server}} \leftarrow \partial L / \partial \boldsymbol{\theta}_{\text{server}}$.
    \State \hspace{2.3em} Update: $\boldsymbol{\theta}_{\text{server}} \leftarrow \boldsymbol{\theta}_{\text{server}} - \eta\,\Delta\boldsymbol{\theta}_{\text{server}}$.
    \For{each node $i$ in parallel}
      \State Receive sub-gradient $\mathbf{G}^{(i)} \leftarrow \mathbf{G}_m[\mathcal{B}_m^{(i)},:]$ from orchestrator.
      \State Backpropagate: $\Delta\boldsymbol{\theta}^{(i)}_{\text{node}} \leftarrow \partial L / \partial \boldsymbol{\theta}_{\text{node}}$ via $\mathbf{G}^{(i)}$.
      \State Send $\Delta\boldsymbol{\theta}^{(i)}_{\text{node}}$ to orchestrator.
    \EndFor
    \State \textbf{Orchestrator:} Aggregate: $\Delta\boldsymbol{\theta}_{\text{node}} \leftarrow \frac{1}{B}\sum_{i=1}^{N} b_i \cdot \Delta\boldsymbol{\theta}^{(i)}_{\text{node}}$.
    \State \hspace{2.3em} Update: $\boldsymbol{\theta}_{\text{node}} \leftarrow \boldsymbol{\theta}_{\text{node}} - \eta\,\Delta\boldsymbol{\theta}_{\text{node}}$.
    \State \hspace{2.3em} Broadcast updated $\boldsymbol{\theta}_{\text{node}}$ to all nodes.
    \end{algorithmic}
\end{algorithm}

The aggregation in line~17 weights each node's gradient contribution by its sub-batch size $b_i = |\mathcal{B}_m^{(i)}|$, yielding the correct gradient of the global loss on the virtual batch regardless of how the $B$ samples are distributed across nodes. When $b_i = B/N$ for all $i$ (equal sub-batch sizes), this reduces to the simple arithmetic mean $\frac{1}{N}\sum_i \Delta\boldsymbol{\theta}^{(i)}_{\text{node}}$.

The base protocol is mathematically equivalent to centralized mini-batch SGD on $\mathcal{B}_m$ (Proposition~\ref{prop:equiv}). The per-batch communication cost is $O(B \cdot d + N \cdot |\boldsymbol{\theta}_{\text{node}}|)$, dominated by activation transmission and node-model gradient exchange, which is substantially less than FL's per-round cost of $O(N \cdot |\boldsymbol{\theta}|)$ when $B \cdot d \ll N \cdot |\boldsymbol{\theta}|$.

\subsection{Secure Protocol}
\label{sec:secure_proto}

The secure protocol adds additive-secret-sharing MPC via a non-colluding helper~\cite{damgard_multiparty_2012,mohassel_secureml_2017,mohassel_aby3_2018,wagh_securenn_2019}. It has four phases per virtual batch. The exact lightweight path below assumes that operations applied independently to the two shares are linear or affine (Section~\ref{sec:model}); nonlinear server networks require secure nonlinear subprotocols to make the decompositions exact.

\paragraph{Phase 1: Share Generation.} Each node $i$ computes its cut-layer activation $\mathbf{A}^{(i)}$ as in Equation~\eqref{eq:activation}. It then samples a random mask $\mathbf{R}^{(i)} \sim \mathcal{U}(\mathbb{R}^{|\mathcal{B}_m^{(i)}| \times d})$ and computes two shares:
\begin{equation}
  \mathbf{A}^{(i)}_1 = \mathbf{R}^{(i)}, \qquad \mathbf{A}^{(i)}_2 = \mathbf{A}^{(i)} - \mathbf{R}^{(i)}.
  \label{eq:node_share}
\end{equation}
Node $i$ transmits $\mathbf{A}^{(i)}_1$ to the orchestrator and $\mathbf{A}^{(i)}_2$ to the helper server, retaining neither share locally after transmission.

\paragraph{Phase 2: Independent Forward Passes and Label-Based Loss.} Both servers concatenate their respective share matrices:
\begin{equation}
  \mathbf{A}^{(k)}_m = \bigl[\mathbf{A}^{(1)}_k \,\|\, \cdots \,\|\, \mathbf{A}^{(N)}_k\bigr], \quad k \in \{1, 2\}.
\end{equation}
Each server independently executes the linear/affine sharewise server path on its share:
\begin{equation}
  \hat{\mathbf{Y}}_k = f_{\text{server}}\!\left(\mathbf{A}^{(k)}_m;\, \boldsymbol{\theta}_{\text{server}}\right), \quad k \in \{1, 2\}.
\end{equation}
The helper sends $\hat{\mathbf{Y}}_2$ to the orchestrator over the dedicated inter-server coordination channel. The orchestrator combines it with $\hat{\mathbf{Y}}_1$ and reconstructs
\begin{equation}
  \hat{\mathbf{Y}} = \hat{\mathbf{Y}}_1 + \hat{\mathbf{Y}}_2.
\end{equation}
Because this sharewise server path is linear/affine, with any bias handled once or shared consistently, $\hat{\mathbf{Y}}_1 + \hat{\mathbf{Y}}_2 = f_{\text{server}}(\mathbf{A}^{(1)}_m + \mathbf{A}^{(2)}_m) = f_{\text{server}}(\mathbf{A}_m)$ holds exactly. The orchestrator uses the label vector $\mathbf{Y}_m$ associated with the virtual batch to compute the scalar loss $L = \ell(\hat{\mathbf{Y}}, \mathbf{Y}_m)$ and the output-gradient signal
\begin{equation}
  \mathbf{H}_m = \frac{\partial L}{\partial \hat{\mathbf{Y}}}.
  \label{eq:output_grad}
\end{equation}
The orchestrator sends $\mathbf{H}_m$ to the helper for backpropagation. This is the only point where output predictions are reconstructed, adding one inter-server round independent of $N$. Labels are visible to the orchestrator but need not be sent to the helper; $\mathbf{H}_m$ is nevertheless label-dependent and should be treated as a training signal (Section~\ref{sec:label_u_shape}).

\paragraph{Phase 3: Secure Backward Pass.} Given $\mathbf{H}_m$, each server $k$ computes the partial cut-layer gradient associated with its activation share:
\begin{equation}
  \mathbf{G}^{(k)}_m
  =
  \mathbf{H}_m \cdot
  \frac{\partial \hat{\mathbf{Y}}_k}{\partial \mathbf{A}^{(k)}_m},
  \quad k \in \{1,2\}.
\end{equation}
For this linear/affine sharewise server path, the full cut-layer gradient decomposes additively:
\begin{equation}
  \mathbf{G}_m = \frac{\partial L}{\partial \mathbf{A}_m}
  =
  \mathbf{H}_m \cdot \frac{\partial \hat{\mathbf{Y}}_1}{\partial \mathbf{A}^{(1)}_m}
  +
  \mathbf{H}_m \cdot \frac{\partial \hat{\mathbf{Y}}_2}{\partial \mathbf{A}^{(2)}_m}
  = \mathbf{G}^{(1)}_m + \mathbf{G}^{(2)}_m.
  \label{eq:grad_decomp}
\end{equation}
Thus $(\mathbf{G}^{(1)}_m,\mathbf{G}^{(2)}_m)$ is a valid additive sharing of $\mathbf{G}_m$: neither server sees the plaintext cut-layer gradient. Server $k$ returns sub-gradient share $\mathbf{G}^{(i)}_k=\mathbf{G}^{(k)}_m[\mathcal{B}_m^{(i)},:]$ to node $i$.

\paragraph{Phase 4: Node-Side Update and Aggregation.} Node $i$ receives $\mathbf{G}^{(i)}_1$ and $\mathbf{G}^{(i)}_2$, reconstructs $\mathbf{G}^{(i)}=\mathbf{G}^{(i)}_1+\mathbf{G}^{(i)}_2$, and backpropagates through $f_{\text{node}}$ to compute $\Delta\boldsymbol{\theta}^{(i)}_{\text{node}}$. The orchestrator aggregates node gradients using the base-mode size-weighted formula.

\begin{proposition}[Secure Gradient Correctness]
    \label{prop:secure_equiv}
    Let Proposition~\ref{prop:equiv} hold, and suppose the operations evaluated sharewise by the two servers are linear or affine, with biases handled once or shared consistently. The gradient update produced by the secure protocol is numerically identical to that of the base protocol on the same virtual batch $\mathcal{B}_m$, and therefore to centralized mini-batch SGD on $\mathcal{B}_m$. Thus, delegating the server-side portion of backpropagation does not affect accuracy losslessness when the returned cut-layer gradient is exact; any loss of exactness comes from approximating the server computation under masking, not from the cut itself.
\end{proposition}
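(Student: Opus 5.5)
The plan is to track the secure protocol phase by phase and show that, at each phase, the quantities the two servers hold are additive shares of exactly the quantities base mode (Algorithm~\ref{alg:base}) computes. The conclusion then follows from Proposition~\ref{prop:equiv}. I will write the sharewise server path as $f_{\text{server}}(\mathbf{A};\boldsymbol{\theta}_{\text{server}}) = \mathbf{A}W + \beta\mathbf{b}$, a composition of linear maps plus a bias. Here $\beta\in\{0,1\}$ marks which server adds the bias; more generally the bias is split as $\mathbf{b}=\mathbf{b}_1+\mathbf{b}_2$.

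\textbf{Step 1 (activations).} By Equation~\eqref{eq:node_share}, $\mathbf{A}^{(i)}_1+\mathbf{A}^{(i)}_2=\mathbf{A}^{(i)}$ for every node. Row-wise concatenation is linear, so by Equations~\eqref{eq:lin_add}--\eqref{eq:lin_scale} we get $\mathbf{A}^{(1)}_m+\mathbf{A}^{(2)}_m=\mathbf{A}_m$. Both servers concatenate in the same batch order, so this is the matrix the orchestrator would form in base mode.

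\textbf{Step 2 (forward pass and loss).} By linearity, $\hat{\mathbf{Y}}_1+\hat{\mathbf{Y}}_2=(\mathbf{A}^{(1)}_m+\mathbf{A}^{(2)}_m)W+\mathbf{b}=f_{\text{server}}(\mathbf{A}_m)$. The bias is counted once because of the convention on $\mathbf{b}$. It follows that $L$ and $\mathbf{H}_m$ in Equation~\eqref{eq:output_grad} coincide with their base-mode values.

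\textbf{Step 3 (cut-layer gradient).} The Jacobian $\partial\hat{\mathbf{Y}}_k/\partial\mathbf{A}^{(k)}_m$ is the constant linear operator $W$, independent of the share. So $\mathbf{G}^{(1)}_m+\mathbf{G}^{(2)}_m=2\,\mathbf{H}_mW^\top$ would be wrong; this is the step I expect to be the real obstacle. Read literally, Equation~\eqref{eq:grad_decomp} double counts the gradient. I would resolve it by insisting that the backward signal itself be shared. The orchestrator splits $\mathbf{H}_m=\mathbf{H}^{(1)}_m+\mathbf{H}^{(2)}_m$, or equivalently one server uses $\mathbf{H}_m$ and the other uses $0$ plus a mask. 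Then $\mathbf{G}^{(k)}_m=\mathbf{H}^{(k)}_mW^\top$ and the shares sum to $\mathbf{H}_mW^\top=\partial L/\partial\mathbf{A}_m$ by the chain rule. I will state this interpretation of Phase~3 explicitly as the one under which Equation~\eqref{eq:grad_decomp} is an identity.

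Slicing rows by $\mathcal{B}_m^{(i)}$ is linear, so node $i$ reconstructs exactly $\mathbf{G}_m[\mathcal{B}_m^{(i)},:]$. The server-parameter gradient also matches base mode: $\partial L/\partial W=\mathbf{A}_m^\top\mathbf{H}_m$, which is computed jointly, or equivalently as $\sum_k(\mathbf{A}^{(k)}_m)^\top\mathbf{H}_m$ by bilinearity.

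\textbf{Step 4 (node update and conclusion).} Node backpropagation and the size-weighted aggregation are the same deterministic functions as in Algorithm~\ref{alg:base}, and they receive identical inputs. Therefore $\Delta\boldsymbol{\theta}_{\text{node}}$ and the resulting update are identical. Proposition~\ref{prop:equiv} then gives equality with centralized mini-batch SGD on $\mathcal{B}_m$. The final sentence of Proposition~\ref{prop:secure_equiv} follows because the only place nonlinearity could enter is the identity $\sum_k f(\mathbf{A}^{(k)}_m)=f(\sum_k\mathbf{A}^{(k)}_m)$ used in Steps~2--3. This matches the ReLU counterexample from Section~\ref{sec:pre}.

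"Numerically identical" should be understood modulo the arithmetic representation in Equation~\eqref{eq:sharing}. It is exact over the ideal ring, and holds up to rounding for floating-point masks.
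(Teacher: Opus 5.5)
Your skeleton is the paper's: shares of $\mathbf{A}_m$ sum exactly, and linearity gives $\hat{\mathbf{Y}}_1+\hat{\mathbf{Y}}_2=f_{\text{server}}(\mathbf{A}_m)$, so $L$ and $\mathbf{H}_m$ agree with base mode. The cut-layer gradient then decomposes additively, and node-side backpropagation and aggregation are identical. The substantive difference is Step~3, and there you are right and the paper's proof is not.

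The paper justifies Equation~\eqref{eq:grad_decomp} by saying the Jacobian does not depend on the hidden plaintext. That independence is exactly what breaks it. $L$ depends on $\mathbf{A}^{(k)}_m$ only through $\hat{\mathbf{Y}}=\hat{\mathbf{Y}}_1+\hat{\mathbf{Y}}_2$, so each term $\mathbf{H}_m\,\partial\hat{\mathbf{Y}}_k/\partial\mathbf{A}^{(k)}_m$ equals $\partial L/\partial\mathbf{A}^{(k)}_m=\partial L/\partial\mathbf{A}_m=\mathbf{H}_mW^\top$. The gradient with respect to either summand of $\mathbf{A}_m$ is the full gradient, so the two terms sum to $2\mathbf{G}_m$. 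Under Phase~3 as written, node $i$ reconstructs $2\mathbf{G}_m[\mathcal{B}_m^{(i)},:]$. Node backpropagation is linear in the upstream gradient, so the node takes twice the base-mode step, and the proposition is false for that protocol.

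Your splitting $\mathbf{H}_m=\mathbf{H}^{(1)}_m+\mathbf{H}^{(2)}_m$ should therefore be stated as a necessary change to Phase~3, not an ``interpretation''. With the full $\mathbf{H}_m$ at both servers, the displayed formula gives $2\mathbf{G}_m$. With that change your argument proves the proposition, whereas the paper's short proof asserts an identity that does not hold.

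There is one loose end in your Step~3. Writing the server-weight gradient as $\sum_k(\mathbf{A}^{(k)}_m)^\top\mathbf{H}_m$ requires the helper to hold the full $\mathbf{H}_m$, but your repair gives it only $\mathbf{H}^{(2)}_m$. With a shared $\mathbf{H}_m$, the cross terms $(\mathbf{A}^{(1)}_m)^\top\mathbf{H}^{(2)}_m+(\mathbf{A}^{(2)}_m)^\top\mathbf{H}^{(1)}_m$ need an interactive multiplication. For correctness it is enough to keep sending the full $\mathbf{H}_m$ for the weight gradient, as the paper does, and use the split only to form the cut-layer gradient shares. The paper never specifies the secure-mode server-parameter update, so here you are also filling a gap.

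Do not read the repair as restoring the paper's claim that no single server sees $\mathbf{G}_m$. The orchestrator holds both $\mathbf{H}_m$ and $W$ and can compute $\mathbf{H}_mW^\top$ itself. That is a security issue, outside the correctness statement you are proving.
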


\begin{proof}
    By construction, $\mathbf{A}^{(1)}_m + \mathbf{A}^{(2)}_m = \mathbf{A}_m$ exactly (Equation~\eqref{eq:node_share}). Because the sharewise server computation is linear/affine, $\hat{\mathbf{Y}}_1 + \hat{\mathbf{Y}}_2 = f_{\text{server}}(\mathbf{A}_m)$ exactly. Its Jacobian is independent of the hidden plaintext activation, so the gradient decomposition in Equation~\eqref{eq:grad_decomp} is also algebraically exact. No approximation error is introduced by masking in this case, and the secure protocol is numerically equivalent to the base protocol.
\end{proof}

\subsection{Optimization}

\paragraph{Learning rate and batch size.} TL++ follows the standard linear scaling rule~\cite{goyal_accurate_2017}: when the virtual batch size $B$ is increased by a factor $k$, the learning rate $\eta$ is also scaled by $k$ to maintain comparable gradient noise. This rule applies identically in base and secure modes, since both compute the same gradient estimates.

\paragraph{Noise augmentation for privacy tuning.} To provide a configurable privacy-utility trade-off, TL++ optionally adds independent calibrated Gaussian noise to activation shares and gradient shares before transmission, following the standard DP intuition of calibrated noise and formal accounting~\cite{dwork_differential_2006,abadi_deep_2016,mironov_renyi_2017}. For activation shares:
\begin{equation}
  \tilde{\mathbf{A}}^{(i)}_k = \mathbf{A}^{(i)}_k + \boldsymbol{\varepsilon}^{(i)}_{\text{act},k}, \quad \boldsymbol{\varepsilon}^{(i)}_{\text{act},k} \sim \mathcal{N}(\mathbf{0},\,\sigma_{\text{act}}^2\mathbf{I}),
  \label{eq:noise_act}
\end{equation}
and for cut-layer gradient shares returned to nodes:
\begin{equation}
  \tilde{\mathbf{G}}^{(i)}_k = \mathbf{G}^{(i)}_k + \boldsymbol{\varepsilon}^{(i)}_{\text{grad},k}, \quad \boldsymbol{\varepsilon}^{(i)}_{\text{grad},k} \sim \mathcal{N}(\mathbf{0},\,\sigma_{\text{grad}}^2\mathbf{I}).
  \label{eq:noise_grad}
\end{equation}
With $\sigma_{\text{act}}=\sigma_{\text{grad}}=0$, the protocol is exact additive sharing with no noise-induced accuracy loss. Nonzero noise can empirically obfuscate intermediate values against reconstruction attacks~\cite{zhu_deep_2019,zhao_idlg_2020,geiping_inverting_2020}, but changes the optimization path and may reduce accuracy. The parameters are independent because gradient shares can reveal more than activation shares; default values are $\sigma_{\text{act}}=0.02$ and $\sigma_{\text{grad}}=0.10$. These defaults are not differential-privacy claims; formal accounting is deferred to Section~\ref{sec:discussion}.

\paragraph{LoRA integration.}
For large language model (LLM) adaptation, TL++ can train only LoRA-adapted~\cite{hu_lora_2021} modules, following adapters, adaptive low-rank allocation, and quantized LoRA~\cite{houlsby_parameter-efficient_2019,zhang_adalora_2023,dettmers_qlora_2023}. Base matrices $\mathbf{W}\in\mathbb{R}^{m\times n}$ are frozen and only $\Delta\mathbf{W}=\mathbf{B}\mathbf{A}$, with $\mathbf{A}\in\mathbb{R}^{r\times n}$, $\mathbf{B}\in\mathbb{R}^{m\times r}$, and $r\ll\min(m,n)$, is trained. LoRA lowers synchronization cost, but it does not make an entire transformer server path compatible with sharewise additive evaluation: attention softmax, layer normalization, and other nonlinear components still require placement below the cut, base-mode plaintext evaluation, or secure nonlinear protocols. The exact sharing argument applies only to linear portions.


\section{Experiments}
\label{sec:experiments}

This section evaluates whether TL++ retains centralized-training utility while reducing full-model exchange. We report CIFAR-10~\cite{krizhevsky_learning_2009} and BioGPT/\allowbreak{}PubMedQA~\cite{luo_biogpt_2022,jin_pubmedqa_2019} results with deterministic tensor-payload accounting rather than packet-level runtime measurements. Because cut depth, mode, and synchronization cadence affect the utility--communication trade-off, we state the conditions under which each operating point is valid.

\subsection{Experimental Setup and Reporting Protocol}
\label{sec:setup}

\paragraph{Workloads and configurations.} The image-classification experiments use CIFAR-10~\cite{krizhevsky_learning_2009}. We report class-count tasks with $C \in \{2,3,4,5,6,7,8,9,10\}$, where $C=10$ is the full task. The language-model experiments use BioGPT~\cite{luo_biogpt_2022} on PubMedQA~\cite{jin_pubmedqa_2019} with both full fine-tuning and LoRA adaptation~\cite{hu_lora_2021}. For CIFAR-10, we compare centralized training, TL++ base mode, TL++ secure mode, and representative distributed-learning baselines. Base TL++ is evaluated at cut~1, cut~2, and cut~3. Secure TL++ is also reported at all three cuts, but only cut~3 satisfies the exact additive-share evaluation condition used in Proposition~\ref{prop:secure_equiv}. Secure cut~1 and secure cut~2 place nonlinear operations above the cut; without an added secure nonlinear protocol, they are therefore reported as sensitivity results rather than exact secure configurations.

\paragraph{Metrics and interpretation.} Accuracy is test accuracy percentage, reported as mean $\pm$ standard deviation over five independent runs. Comparisons are descriptive because per-seed paired predictions and trajectories are unavailable; TL++ means that exceed centralized baselines are treated as run-to-run variation. The accuracy tables report utility metrics only; Section~\ref{sec:comm_exp} reports communication costs with tensor payloads plus serialization framing, headers, one-way latency, and protocol overhead. Accordingly, we describe byte-level ``payload reduction'' rather than wall-clock speedup unless runtime measurements are available.

\subsection{CIFAR-10 Accuracy Across Class Counts}
\label{sec:acc_exp}

Table~\ref{tab:accuracy} shows that TL++ remains close to centralized training across the CIFAR-10 class-count sweep. On the full task, centralized training reaches $92.03\pm0.15\%$, base cut~1 reaches $91.41\pm0.19\%$ (0.62 points lower), and exact secure cut~3 reaches $90.93\pm0.17\%$ (1.10 points lower), preserving most centralized utility while keeping raw examples distributed.

\begin{table*}[t]
    \caption{CIFAR-10 test accuracy (\%) across class-count tasks. Each entry is mean $\pm$ standard deviation over five runs. The best mean in each row is shown in \textbf{bold}; ties and small differences should be interpreted descriptively because per-seed paired tests were not available.}
    \label{tab:accuracy}
    \centering
    \footnotesize
    \begin{tabular}{@{}rccccccc@{}}
        \toprule
        Classes $C$ & Centralized & base cut~1 & base cut~2 & base cut~3 & secure cut~1 & secure cut~2 & secure cut~3 \\
        \midrule
        2  & $99.04 \pm 0.17$ & $98.98 \pm 0.09$ & $98.84 \pm 0.14$ & $\mathbf{99.09 \pm 0.13}$ & $98.75 \pm 0.12$ & $98.77 \pm 0.10$ & $98.93 \pm 0.13$ \\
        3  & $\mathbf{97.37 \pm 0.15}$ & $96.99 \pm 0.48$ & $96.56 \pm 0.42$ & $96.84 \pm 0.35$ & $95.29 \pm 0.68$ & $95.92 \pm 0.51$ & $96.96 \pm 0.33$ \\
        4  & $95.06 \pm 0.55$ & $\mathbf{95.19 \pm 0.11}$ & $94.73 \pm 0.51$ & $94.24 \pm 0.85$ & $93.27 \pm 1.04$ & $93.76 \pm 0.54$ & $94.78 \pm 0.70$ \\
        5  & $\mathbf{94.56 \pm 0.59}$ & $94.34 \pm 0.48$ & $93.82 \pm 0.62$ & $92.84 \pm 0.75$ & $91.74 \pm 0.55$ & $92.95 \pm 0.86$ & $94.34 \pm 0.21$ \\
        6  & $91.57 \pm 0.71$ & $\mathbf{91.90 \pm 0.26}$ & $91.48 \pm 0.29$ & $90.82 \pm 0.59$ & $89.09 \pm 1.07$ & $90.88 \pm 0.53$ & $91.62 \pm 0.49$ \\
        7  & $\mathbf{92.30 \pm 0.13}$ & $91.94 \pm 0.20$ & $91.59 \pm 0.18$ & $90.35 \pm 0.62$ & $89.95 \pm 1.30$ & $91.14 \pm 0.32$ & $91.29 \pm 0.84$ \\
        8  & $\mathbf{91.97 \pm 0.89}$ & $91.57 \pm 0.58$ & $91.03 \pm 0.24$ & $88.91 \pm 0.63$ & $90.33 \pm 0.96$ & $90.55 \pm 0.33$ & $91.52 \pm 0.18$ \\
        9  & $\mathbf{92.39 \pm 0.17}$ & $91.88 \pm 0.17$ & $91.09 \pm 0.17$ & $90.02 \pm 0.38$ & $90.47 \pm 0.90$ & $90.87 \pm 0.34$ & $91.38 \pm 0.24$ \\
        10 & $\mathbf{92.03 \pm 0.15}$ & $91.41 \pm 0.19$ & $90.65 \pm 0.25$ & $89.67 \pm 0.27$ & $90.36 \pm 0.69$ & $90.16 \pm 1.07$ & $90.93 \pm 0.17$ \\
        \bottomrule
    \end{tabular}
\end{table*}

Averaged over $C=2$ through $C=10$, centralized training obtains 94.03\%, base cut~1 obtains 93.80\%, and secure cut~3 obtains 93.53\%. The corresponding average gaps from centralized training are 0.23 and 0.50 percentage points. Accuracy declines as the label space becomes harder, but TL++ follows the same trend as centralized training rather than showing a separate collapse from virtual-batch construction.

The sweep also shows why the best trusted and exact secure cuts differ. As classes increase, cut placement matters more: earlier cuts leave more representation learning on the server and preserve base-mode accuracy, whereas the deepest cut reduces the server side to the linear portion required for exact sharing. Base cut~1 therefore optimizes utility and payload in trusted mode, while secure cut~3 optimizes algebraic compatibility.

\subsection{Aggregate CIFAR-10 Comparison and Gap Analysis}
\label{sec:gap_analysis}

Table~\ref{tab:gap_summary} summarizes the CIFAR-10 method comparison using full ten-class accuracy, average accuracy across the class-count sweep, and gaps relative to centralized training.

\begin{table}[t]
    \caption{Aggregate CIFAR-10 accuracy and gap summary. Average accuracy and gaps are computed over $C=2$ through $C=10$ for measured rows. ``Worst gap'' is the most negative class-count gap relative to centralized training.}
    \label{tab:gap_summary}
    \centering
    \small
    \begin{tabular}{@{}lcccc@{}}
        \toprule
        Configuration & 10-class & Avg. & Avg. gap & Worst gap \\
        \midrule
        Centralized & 92.03 & 94.03 & -- & -- \\
        \midrule
        FedAvg~\cite{mcmahan_communication-efficient_2017} & 74.56 & 79.48 & -14.55 & -18.45 \\
        FedProx~\cite{li_fedprox_2020} & 74.64 & 79.45 & -14.58 & -18.45 \\
        SCAFFOLD~\cite{karimireddy_scaffold_2020} & 71.32 & 76.12 & -17.92 & -22.58 \\
        Standard SL~\cite{vepakomma_split_2018} & 78.88 & 78.84 & -15.20 & -27.50 \\
        SplitFed~\cite{thapa_splitfed_2022} & 74.62 & 79.46 & -14.58 & -18.53 \\
        \midrule
        TL++ base, cut~1 & 91.41 & 93.80 & -0.23 & -0.62 \\
        TL++ base, cut~2 & 90.65 & 93.31 & -0.72 & -1.30 \\
        TL++ base, cut~3 & 89.67 & 92.53 & -1.50 & -3.06 \\
        TL++ secure, cut~1 & 90.36 & 92.14 & -1.89 & -2.82 \\
        TL++ secure, cut~2 & 90.16 & 92.78 & -1.25 & -1.87 \\
        TL++ secure, cut~3 & 90.93 & 93.53 & -0.50 & -1.10 \\
        \bottomrule
    \end{tabular}
\end{table}

The aggregate results distinguish TL++ from the measured FL-style and split-learning baselines in this workload summary. The baseline methods are substantially below centralized training, whereas TL++ base and secure configurations remain within a few percentage points. Among base-mode runs, cut~1 has the best full-task accuracy and the smallest average gap. Among secure-mode runs, cut~3 has the best full-task accuracy and the smallest average gap, and it is also the cut that satisfies the exact linear-server condition. This distinction is the basis for the deployment guidance in Section~\ref{sec:discussion}: base cut~1 is the strongest trusted configuration in the reported CIFAR-10 experiments, while secure cut~3 is the supported exact secure configuration.

The comparison is most informative when interpreted through the optimization objective of each method. FL-style baselines optimize local objectives between synchronization events; under label skew, the local gradients may point away from the pooled-data gradient and therefore require additional correction, more rounds, or stronger regularization. Standard SL avoids full-model exchange, but it does not by itself create a pooled virtual batch across participants. TL++ directly targets this missing ingredient: it keeps raw samples local while forming a virtual batch that better approximates centralized gradient aggregation. The gap summary therefore supports the claim that virtual-batch construction is the primary source of TL++'s utility advantage in this setting, while cut-layer transmission is the primary source of its communication advantage.

At the same time, the table should not be read as a universal ranking of all distributed learning systems. The relative baseline performance depends on the non-IID partition, model capacity, optimizer schedule, number of local steps, and communication budget. The contribution of Table~\ref{tab:gap_summary} is more specific: under the reported CIFAR-10 protocol, TL++ occupies a near-centralized utility regime that the selected FL-style and split-learning baselines do not reach.

\subsection{Cut-Layer Sensitivity}
\label{sec:ablations}

Table~\ref{tab:cut_ablation} isolates the effect of cut placement. The cut dimension is included because it affects both accuracy and communication: earlier cuts preserve more server-side computation, while deeper cuts reduce activation and share sizes.

\begin{table}[t]
    \caption{Cut-layer sensitivity on CIFAR-10. ``10-class'' reports the full CIFAR-10 task. ``Avg.'' averages mean accuracies across $C=2$ through $C=10$. $\Delta$ Avg. reports Secure Avg. minus Base Avg. at the same cut.}
    \label{tab:cut_ablation}
    \centering
    \small
    \begin{tabular}{@{}lrrrrrr@{}}
        \toprule
        Cut & Dim. & Base 10-class & Secure 10-class & Base Avg. & Secure Avg. & $\Delta$ Avg. \\
        \midrule
        cut~1 & $16{,}384$ & 91.41 & 90.36 & 93.80 & 92.14 & -1.66 \\
        cut~2 & $8{,}192$  & 90.65 & 90.16 & 93.31 & 92.78 & -0.53 \\
        cut~3 & $512$     & 89.67 & 90.93 & 92.53 & 93.53 & +1.00 \\
        \bottomrule
    \end{tabular}
\end{table}

Base-mode accuracy decreases with depth, from 91.41\% at cut~1 to 90.65\% at cut~2 and 89.67\% at cut~3, suggesting that this architecture benefits from leaving more representation learning on the server. Secure mode differs: secure cut~3 is strongest, with 90.93\% ten-class and 93.53\% average accuracy. We do not treat its advantage over base cut~3 as a general benefit of secrecy; rather, exact additive sharing at the linear cut introduced no observable penalty.

The ablation also shows that activation dimension alone is insufficient. Deeper cuts shrink the cut tensor from $16{,}384$ dimensions to $512$, but move trainable parameters to the node side; with per-batch synchronization, total payload need not decrease monotonically. For secure TL++, cut~1 and cut~2 are sensitivity runs before nonlinear server layers and do not satisfy the proof. Secure cut~3 is therefore the correctness point even if its conservative-schedule payload is not minimal; communication should then be improved through amortization, compression, or parameter-efficient updates.

\subsection{Performance as Label-Space Complexity Increases}
\label{sec:class_complexity}

The class-count sweep provides a stress test for heterogeneous classification. The average over the 2-, 3-, and 4-class tasks is 97.16\% for centralized training, 97.05\% for base cut~1, and 96.89\% for secure cut~3. The average over the harder 8-, 9-, and 10-class tasks is 92.13\%, 91.62\%, and 91.28\%, respectively. The ordering is stable: centralized training is the reference, base cut~1 is the strongest base TL++ configuration, and secure cut~3 is the strongest exact secure TL++ configuration. This supports the bounded claim that TL++ tracks the centralized accuracy curve closely as class complexity increases.

The moderate widening on harder tasks is expected because the model separates more decision regions under the same resolution and distributed constraints. The gap grows smoothly rather than abruptly; a failed virtual-batch mechanism would likely degrade more sharply as classes are added. Thus, harder tasks mainly reflect CIFAR-10 difficulty while TL++ remains close to the centralized trajectory, consistent with virtual batches recovering centralized-style gradient aggregation rather than making classification easier.

\subsection{BioGPT Fine-Tuning and LoRA Adaptation}
\label{sec:lora_exp}

Table~\ref{tab:lora} evaluates whether the TL++ pattern extends beyond convolutional image classification. The BioGPT/\allowbreak{}PubMedQA results~\cite{luo_biogpt_2022,jin_pubmedqa_2019} compare centralized training, distributed-learning baselines, TL++ base, and TL++ secure under both full fine-tuning and parameter-efficient LoRA adaptation~\cite{hu_lora_2021,houlsby_parameter-efficient_2019,dettmers_qlora_2023}.

\begin{table}[t]
    \caption{BioGPT/\allowbreak{}PubMedQA accuracy (\%) for full fine-tuning and LoRA adaptation. Entries are mean $\pm$ standard deviation over five runs.}
    \label{tab:lora}
    \centering
    \small
    \begin{tabular}{@{}lcc@{}}
        \toprule
        Method & Full fine-tuning & LoRA \\
        \midrule
        Centralized & $81.33 \pm 0.47$ & $81.07 \pm 1.25$ \\
        \midrule
        FedAvg~\cite{mcmahan_communication-efficient_2017} & $65.67 \pm 3.21$ & $55.89 \pm 2.22$ \\
        FedProx~\cite{li_fedprox_2020} & $65.33 \pm 2.73$ & $59.44 \pm 5.17$ \\
        SCAFFOLD~\cite{karimireddy_scaffold_2020} & $51.67 \pm 0.33$ & $53.67 \pm 1.33$ \\
        Standard SL~\cite{vepakomma_split_2018} & $55.44 \pm 5.10$ & $67.11 \pm 2.71$ \\
        SplitFed~\cite{thapa_splitfed_2022} & $67.44 \pm 3.50$ & $61.44 \pm 1.26$ \\
        \midrule
        TL++ base & $80.87 \pm 1.68$ & $81.20 \pm 2.13$ \\
        TL++ secure & $\mathbf{83.67 \pm 0.75}$ & $\mathbf{82.60 \pm 2.24}$ \\
        \bottomrule
    \end{tabular}
\end{table}

TL++ remains close to centralized fine-tuning on PubMedQA. For full fine-tuning, TL++ base reaches 80.87\%, within 0.46 percentage points of centralized training, while TL++ secure reaches 83.67\%. For LoRA, TL++ base reaches 81.20\%, comparable to centralized LoRA at 81.07\%, and TL++ secure reaches 82.60\%. Because the standard deviations are non-negligible and the aggregate summaries do not provide paired trajectories, these results should be read as evidence of comparable utility rather than as a definitive ranking of TL++ base, TL++ secure, full fine-tuning, and LoRA.

The language-model experiment suggests that traversal learning is not tied to CNNs: utility can be preserved when trainable state is limited to parameter-efficient modules. LoRA reduces trainable parameters~\cite{hu_lora_2021}, consistent with adapter and quantized fine-tuning work~\cite{houlsby_parameter-efficient_2019,dettmers_qlora_2023}, and therefore reduces synchronization pressure. The security interpretation is narrower: unless nonlinear transformer operations are below the cut or protected by secure nonlinear protocols, BioGPT secure results are protocol-level sensitivity evidence rather than a proof of exact secure transformer training. PubMedQA is also a small case study; longer-context tasks, generative metrics, calibration, and domain-shift tests are needed before broad claims about TL++ for medical language models.

\subsection{Communication-Efficiency Measurements}
\label{sec:comm_exp}

We report analytic per-virtual-batch communication and computation for FedAvg, FedProx, SCAFFOLD, standard SL, SplitFed, and TL++ base/secure cut~1/2/3. The default scenario uses $B=128$, $N=10$, 32-bit tensors, 100~GFLOPs/s nodes, a 1{,}000~GFLOPs/s server, 100/200~Mb/s node uplink/downlink, 10~ms one-way node--server latency, 10~Gb/s inter-server bandwidth, 1~ms inter-server latency, 5\% serialization/framing overhead, and 512-byte headers. Accounting includes broadcasts, activations/gradients, labels and indices, node-gradient uploads, optimizer/aggregation work, FedProx/SCAFFOLD control terms, secure shares, output-share exchange, helper refresh, and helper-to-orchestrator relay. Secure cut~1--2 are approximate sensitivity measurements; cut~3 is exact.

\begin{figure}[t]
    \centering    
      \includegraphics[width=0.93\linewidth]{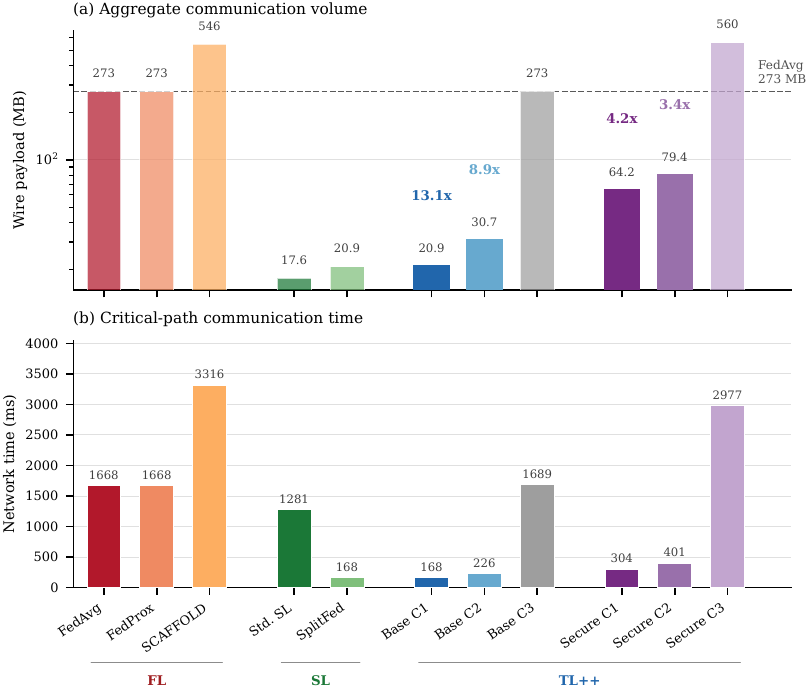}%
    \caption{Communication cost for all eleven methods under the edge--cloud link model. \textbf{(a)} Aggregate wire payload (MB, log scale), including tensor payload plus serialization/framing overhead. Bold ratio labels give the reduction relative to FedAvg for communication-efficient TL++ configurations. \textbf{(b)} Critical-path network time (ms), accounting for parallel node messages, sequential standard-SL client execution, inter-server secure-mode messages, and the current helper model refresh in secure TL++.}
    \label{fig:efficiency}
    \Description{Two-panel communication figure. Panel a is a log-scale wire-payload bar chart for FL, SL, and TL++ methods. FedAvg and FedProx are about 273 MB, SCAFFOLD about 546 MB, Base C1 about 21 MB, Base C2 about 31 MB, Secure C1 about 64 MB, Secure C2 about 79 MB, and Secure C3 about 560 MB. Panel b shows critical-path network time: Base C1 and SplitFed are lowest near 168 ms, Base C2 is near 226 ms, Secure C1 near 304 ms, Secure C2 near 401 ms, FedAvg near 1668 ms, SCAFFOLD near 3316 ms, and standard SL near 1281 ms because clients run sequentially.}
\end{figure}

Figure~\ref{fig:efficiency}(a) separates three regimes. In the full-model regime, FedAvg/FedProx require 273~MB and SCAFFOLD 546~MB; TL++ base cut~3 also reaches 273~MB because node-model synchronization dominates. In the cut-layer regime, standard SL is 17.6~MB, SplitFed and TL++ base cut~1 are 20.9~MB, and TL++ base cut~2 is 30.7~MB, giving 13.1$\times$ and 8.9$\times$ reductions for base cut~1/2 relative to FedAvg. In the secure-synchronization regime, secure cut~1/2 remain below FedAvg at 64.2/79.4~MB (4.2$\times$/3.4$\times$ reductions) but exceed activation-only costs because gradient shares traverse both server paths and helper refresh is included. Exact secure cut~3 reaches 560~MB because node-gradient shares and helper relay dominate.

Figure~\ref{fig:efficiency}(b) shows that payload alone is insufficient. Under the link model, base cut~1/2 take 168/226~ms and secure cut~1/2 take 304/401~ms, all below FedAvg/FedProx (1{,}668~ms) and SCAFFOLD (3{,}316~ms). Standard SL has the smallest payload but takes 1{,}281~ms because clients are serialized. Secure cut~3 is exact but takes 2{,}977~ms under per-batch synchronization, dominated by node-gradient-share exchange and helper relay.

\begin{figure}[t]
    \centering    
      \includegraphics[width=0.93\linewidth]{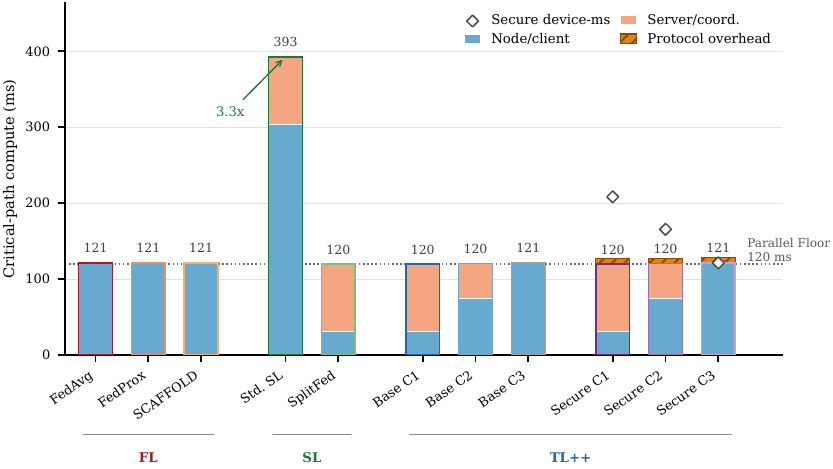}%
    {%
    }
    \caption{Critical-path computation cost for all eleven methods. Bars are stacked: node/client (blue), server/coordinator (salmon), and protocol overhead (hatched orange). The model charges forward/backward FLOPs including pointwise layers, optimizer updates, server aggregation, FedProx and SCAFFOLD per-parameter control terms, and secure sharing/reconstruction arithmetic. Protocol-overhead bars for Secure C1/C2/C3 are rendered at a minimum display height for legibility; actual values are $\leq$0.20~ms. Hollow diamonds mark total secure device-ms (critical-path time plus the helper's parallel server computation); this additional resource cost does not lengthen the critical path.}
    \label{fig:computation}
    \Description{Stacked bar chart of critical-path computation time. All parallel methods cluster near the 120 ms dotted floor. Standard SL is 393 ms, 3.3 times the parallel floor, annotated with an arrow. Secure C1, C2, and C3 show small hatched orange protocol-overhead caps at the top of their bars. Hollow diamond markers above Secure C1 (approx. 210 ms) and Secure C2 (approx. 165 ms) indicate total secure device-ms including the helper's parallel computation.}
\end{figure}

Figure~\ref{fig:computation} shows that computation is far less sensitive to the communication protocol than the network path. All parallel methods cluster near the 120~ms floor: TL++ base and secure cut~1 reach 120~ms, cut~2 reaches 120~ms, and FL-family methods reach 121~ms (the slight excess over the idealized 119~ms is due to optimizer, aggregation, and FedProx/SCAFFOLD per-parameter control terms). Standard SL is the sole outlier at 393~ms because node-side work is serialized across clients, 3.3$\times$ the parallel floor. Secure TL++ adds negligible critical-path arithmetic overhead ($\leq$0.20~ms, rendered at minimum height in the figure), but it incurs additional helper-side device time that runs in parallel and does not extend the critical path: 88.6~ms for secure cut~1 and 45.1~ms for secure cut~2, reflected by the diamond markers at $\approx$210~ms and $\approx$165~ms above those bars.

Taken together, Figures~\ref{fig:efficiency} and~\ref{fig:computation} give a more conservative deployment picture than activation-share estimates alone. TL++ base cut~1 is the strongest trusted configuration: 20.9~MB wire payload, 168~ms network time, 120~ms computation time, and near-centralized accuracy. Secure cut~1 is the most communication-efficient approximate-secure choice at 64.2~MB and 304~ms, with a 120~ms computation path and negligible ($\leq$0.03~ms) critical-path protocol overhead; it does incur 88.6~ms of additional helper device time that runs off the critical path. Exact secure cut~3 satisfies Proposition~\ref{prop:secure_equiv} but reaches 560~MB and 2{,}977~ms network time under per-batch synchronization---comparable to SCAFFOLD---because the large node-side state at this cut drives both the node-model sync and the helper relay. Practical exact-secure deployments therefore require synchronization amortization, sparse or quantized node updates, or parameter-efficient node-side training to bring cut~3's network cost into the cut-layer regime.

\section{Discussion}
\label{sec:discussion}

\subsection{Interpreting the Utility--Communication Trade-off}
\label{sec:discussion_results}

The experiments support a cut-dependent conclusion, not a claim that every TL++ configuration is communication-efficient. In trusted settings, base cut~1 is the strongest reported point: it has the best base accuracy and the smallest measured total payload; base cut~2 remains useful when a deeper split is desired. In secure settings, correctness comes first: cut~3 is the exact secure configuration for the evaluated lightweight additive-sharing protocol because the operations evaluated independently on shares above that cut are linear/affine and therefore compatible with sharewise additive decomposition. Its bottleneck is not the cut tensor but the large node-side state moved under per-batch synchronization.

A practical decision rule follows. Use base cut~1 or cut~2 when the orchestrator is trusted and bandwidth is the main constraint. Use secure cut~3 when the orchestrator and helper are individually semi-honest but non-colluding and exact additive-sharing correctness is required. If neither trust assumption is acceptable, TL++ should be combined with stronger MPC or differential-privacy mechanisms rather than presented as complete protection. The best cut should be re-evaluated for each architecture because activation size, trainable-parameter placement, and synchronization frequency jointly determine payload.

\subsection{Label Sharing and U-Shaped Split Variants}
\label{sec:label_u_shape}

The evaluated protocol follows the common server-side-loss design in SL~\cite{gupta_distributed_2018,vepakomma_split_2018}: nodes keep raw inputs local but provide labels for the selected virtual-batch samples to the orchestrator. This preserves the centralized-gradient interpretation, but it is a separate privacy assumption. In applications where labels are sensitive, secure TL++ should be read narrowly: it protects cut-layer activations and gradients under the semi-honest non-collusion assumption, but it does not hide labels, virtual-batch membership metadata, output predictions, or label-dependent loss gradients from every system component.

A U-shaped split could keep labels at nodes by returning server representations or logits for local loss computation, after which nodes send output-gradient signals. This aligns with label-sensitive SL variants~\cite{vepakomma_split_2018,sav_cure_2024,cheon_homomorphic_2017}, but adds a round trip, stricter sample-order bookkeeping, less orchestrator visibility into loss, and a harder choice over revealing or secret-sharing label-dependent output gradients. We therefore treat label sharing as an explicit deployment assumption and leave U-shaped TL++ for future work.

\subsection{Security Scope and Assumptions}
\label{sec:privacy_analysis}

Secure TL++ protects cut-layer activations and gradients from any single semi-honest server under the non-collusion model of Section~\ref{sec:pre}. Each activation is split by Equation~\eqref{eq:sharing}, so neither the orchestrator nor helper can reconstruct plaintext alone. The helper is the independent second-share holder, replacing single-server trust with non-collusion. The guarantee is limited: exact lightweight correctness requires linear/affine sharewise operations (cut~3 in CIFAR-10), output predictions are reconstructed for loss, and labels plus label-dependent gradients remain visible under the baseline protocol.

The optional Gaussian perturbations $\sigma_{\text{act}}$ and $\sigma_{\text{grad}}$ were not evaluated in this study. End-to-end differential-privacy claims would require clipping or another sensitivity bound, a privacy accountant, and accuracy measurements under the selected noise levels~\cite{abadi_deep_2016,mironov_renyi_2017}. Additive sharing also does not address colluding servers, malicious protocol deviations, compromised nodes, membership inference from final outputs~\cite{shokri_membership_2017}, reconstruction under side information~\cite{geiping_inverting_2020}, or timing and metadata side channels. Stronger deployments would need authenticated communication, auditing, malicious-secure MPC checks, and operational separation between orchestrator and helper.

\subsection{Scalability, Limitations, and Future Work}
\label{sec:limitations}

The main evaluation limitation is that accuracy measurements are combined with deterministic cost accounting rather than deployed packet traces. The model includes serialization/framing overhead and a link-latency/bandwidth scenario, but it does not report measured bytes, graphics processing unit (GPU) utilization, memory pressure, straggler behavior, or end-to-end multi-host throughput.

Two technical limitations are central. First, the exact lightweight secure path requires linear/affine sharewise operations; nonlinear server heads need polynomial approximations, garbled circuits, secure comparison, or hybrid MPC~\cite{yao_protocols_1982,mohassel_secureml_2017,wagh_securenn_2019,mohassel_aby3_2018,kumar_cryptflow_2020}. Second, secure cut~3 payload is dominated by node-side state under per-batch synchronization. Future systems work should test amortized synchronization, quantization, sparse or low-rank node updates, local-update schedules, larger node counts, heterogeneous client speeds, deadline-based virtual batches, leakage measurements, and privacy-noise sweeps with formal accounting.

Another direction is to relax the backward path. Forward-Forward (FF) propagation trains layers with local goodness objectives, including recent distributed FF variants~\cite{hinton_forward-forward_2022,aktemur_going_2024}, while Direct Feedback Alignment (DFA) uses direct random feedback from output errors~\cite{nokland_direct_2016}. These could reduce cut-layer gradient information returned to nodes and weaken sequential node--server dependencies, but they change the optimization objective and may affect TL's centralized-gradient equivalence. FF- or DFA-based TL++ variants should therefore be evaluated with the same accuracy, leakage, and communication accounting.

The workload scope is likewise limited. CIFAR-10 and PubMedQA give image and biomedical-language case studies, but not large-scale vision, speech, recommender, tabular medical, or long-context generative workloads. Future evaluations should include architectures with naturally linear heads, architectures requiring secure nonlinear computation, and tasks whose intermediate activations are tested against reconstruction attacks~\cite{he_model_2019,pasquini_unleashing_2021,zhu_deep_2019}.

\subsection{Deployment Implications}
\label{sec:deployment_implications}

TL++ is most attractive when communication is expensive relative to local computation and data heterogeneity makes local-update FL unreliable. In trusted infrastructure, base TL++ keeps raw data local, avoids full-model exchange, and achieves near-centralized accuracy in the reported experiments. In semi-honest multi-server deployments, secure TL++ prevents any single server from observing plaintext intermediate representations, but designers must either choose a cut whose independently evaluated sharewise operations are linear/affine or add stronger MPC components for nonlinear server operations. If clients are highly resource-constrained or networks are latency-dominated rather than bandwidth-limited, payload reduction alone may not improve training time; payload, latency, memory, and accuracy should be reported together.

\section{Conclusion}
\label{sec:conclusion}

This paper presented TL++, a traversal-learning framework combining virtual-batch optimization with optional additive-secret-sharing secure computation. Unlike conventional FL, TL++ exchanges cut-layer activations and gradients rather than full model updates, reducing communication while keeping raw data local. Secure TL++ adds a non-colluding helper and additive sharing so cut-layer activations and gradients are not exposed to any single server under the semi-honest non-collusion assumption.
Experiments on CIFAR-10 and BioGPT/PubMedQA show that TL++ can approach centralized accuracy while trading off optimization fidelity, communication cost, and privacy. Cost analysis shows that cut~1 and cut~2 achieve the lowest per-step payload in trusted and approximate-secure settings, with reductions up to 13.1$\times$ relative to full-model synchronization. Secure cut~3 satisfies exact additive sharing in the CNN implementation but has FL-level payload under per-batch synchronization, motivating amortization or parameter-efficient updates. TL++ therefore offers a middle ground between FL and SL for distributed AI when its trust, label-disclosure, and exact-share-evaluation assumptions hold. Future work will study runtime efficiency, stronger privacy, larger transformers, secure nonlinear layers, FF/DFA credit assignment, and personalized TL.

\section*{Data Availability}
This work uses two publicly available datasets:
\begin{itemize}
    \item \textbf{CIFAR-10}~\cite{krizhevsky_learning_2009}: VGG-style CNN experiments under extreme non-IID partitioning; \url{https://www.cs.toronto.edu/~kriz/cifar.html}.
    \item \textbf{PubMedQA}~\cite{jin_pubmedqa_2019}: BioGPT~\cite{luo_biogpt_2022} fine-tuning with LoRA~\cite{hu_lora_2021}; \url{https://pubmedqa.github.io}.
\end{itemize}

\section*{Code Availability}
TL++ source code is available under the MIT License:
\begin{itemize}
    \item \textbf{Main Framework}: VGG-style CNN on CIFAR-10; \url{https://github.com/neouly-inc/TLplus}.
    \item \textbf{Medical Case Study}: BioGPT~\cite{luo_biogpt_2022} LoRA fine-tuning on PubMedQA~\cite{jin_pubmedqa_2019}; \url{https://github.com/neouly-inc/TLplus-BioGPT-LoRa}.
\end{itemize}

\section*{Acknowledgments}
This work was partly supported by the Institute of Information \& Communications Technology Planning \& Evaluation (IITP) - ITRC (Information Technology Research Center) grant funded by the Korea government (MSIT) (IITP-2026-RS-2023-00259099) and by 2026 Hongik University Research Fund.

\bibliographystyle{ACM-Reference-Format}
\bibliography{references}

\end{document}